%% file: main.tex
\documentclass[acmsmall,screen]{acmart}

\usepackage{booktabs}
\usepackage{enumitem}
\usepackage{colortbl}
\definecolor{alems}{RGB}{31,119,180}
\definecolor{catrow}{RGB}{240,240,240}
\definecolor{hdrblue}{RGB}{208,228,245}
\definecolor{hdrgreen}{RGB}{208,237,222}
\definecolor{hdrorange}{RGB}{253,225,197}
\usepackage{rotating}
\usepackage{tabularx}
\usepackage{multirow}
\usepackage{listings}
\usepackage{amsmath}
\usepackage{graphicx}
\usepackage{xcolor}
\usepackage{tikz}
\usepackage{pgfplots}
\usepackage{array}
\usepackage{microtype}
\usepackage{balance}
\usepackage{subcaption}
\usepackage{amsthm}
\usepackage{dsfont}
\usepackage{threeparttable}
\theoremstyle{plain}
\newtheorem{theorem}{Theorem}
\newtheorem{proposition}{Proposition}
\newtheorem{corollary}{Corollary}
\theoremstyle{definition}
\newtheorem{definition}{Definition}
\theoremstyle{remark}
\newtheorem{remark}{Remark}

\pgfplotsset{compat=1.18}
\usetikzlibrary{shapes.geometric,arrows.meta,positioning,
  fit,calc,decorations.pathreplacing,backgrounds,patterns,shadows}

\lstdefinestyle{sql}{
  language=SQL,
  basicstyle=\ttfamily\scriptsize,
  keywordstyle=\color{blue!75!black}\bfseries,
  commentstyle=\color{gray!70}\itshape,
  breaklines=true,frame=single,
  numbers=left,numberstyle=\tiny\color{gray},
  xleftmargin=1.2em,framexleftmargin=1.2em,
}

\newcommand{\EpG}{\textsc{EpG}}
\newcommand{\OOI}{\textsc{OOI}}
\newcommand{\ALEMS}{\textsc{A-LEMS}}
\newcommand{\RAPL}{\textsc{RAPL}}
\newcommand{\Eworkflow}{E_{\mathrm{workflow}}}
\newcommand{\Epkg}{E_{\mathrm{pkg}}}
\newcommand{\Edyn}{E_{\mathrm{dyn}}}
\newcommand{\tzero}{t_0}
\newcommand{\tone}{t_1}
\newcommand{\ttwo}{t_2}
\newcommand{\fcpu}{f_{\mathrm{cpu}}}
\newcommand{\eplan}{E_{\mathrm{plan}}}
\newcommand{\eexec}{E_{\mathrm{exec}}}
\newcommand{\esyn}{E_{\mathrm{syn}}}
\newcommand{\egap}{E_{\mathrm{gap}}}

\newcommand{\Hhard}{\mathcal{H}_{\mathrm{hw}}}
\newcommand{\Henv}{\mathcal{H}_{\mathrm{env}}}
\newcommand{\Hrun}{\mathcal{H}_{\mathrm{run}}}

\newcommand{\HEADLINEEPG}{}

\IfFileExists{figures/_headline_ooi.tex}{\input{figures/_headline_ooi.tex}}{}

\if\relax\HEADLINEEPG\relax
  \renewcommand{\HEADLINEEPG}{4.33\times}
\fi

\IfFileExists{figures/stats_generated.tex}{
    \input{figures/stats_generated.tex}
}{
}

\begin{document}
\hyphenation{
  im-ple-men-ta-tion
  im-ple-men-ta-tion-level
  op-er-a-tion-al-ly
  em-pir-i-cal-ly
  re-pro-du-ci-bil-i-ty
  or-ches-tra-tion
  in-fer-ence
  work-flow
  a-gen-tic
  mea-sure-ment
  ac-count-a-bil-i-ty
  bench-mark-ing
  at-tri-bu-tion
  con-fig-u-ra-tion
  mi-cro-code
}
\title{Energy per Successful Goal: Goal-Level Energy Accounting \\
       for Agentic AI Systems}

\author{Deepak Panigrahy}
\affiliation{
  \institution{Independent Researcher}
  \country{USA}
}
\email{deepak.panigrahy03@gmail.com}

\author{Aakash Tyagi}
\affiliation{
  \institution{Texas A\&M University}
  \department{Department of Computer Science and Engineering}
  \country{USA}
}
\email{tyagi@cse.tamu.edu}

\begin{abstract}
Current AI energy benchmarks measure consumption at the 
granularity of a single model invocation or training run. 
For classical single-turn workloads this unit remains coherent. 
For agentic systems --- where a single user goal may trigger 
multi-step orchestration, tool calls, retries, and 
failure-recovery cycles --- the invocation count is an 
implementation artifact rather than a task property, and 
inference-level normalization systematically misrepresents 
the true energy cost of goal completion.

We present \ALEMS{} (Agentic LLM Energy Measurement System), a cross-layer
measurement framework that redefines the fundamental unit of AI energy
accounting from \emph{energy per inference} to \textbf{Energy per Successful
Goal} (\EpG{}). \EpG{} aggregates total workflow energy across all execution
attempts, including failures and retries, normalized by successfully
completed goals. This shifts energy evaluation from low-level model calls to
end-to-end task completion, aligning measurement with how agentic systems are
actually used.

Beyond redefining the unit, \ALEMS{} formalizes energy attribution through three coupled components: a temporal
boundary model that defines the attribution window of a workflow; a
five-layer observation pipeline that maps hardware-level \RAPL{} signals
through baseline subtraction and CPU-fraction attribution to workflow-level
energy; and a reproducibility protocol that binds every measurement to
hardware, environment, and runtime configuration. Building on \EpG{}, we define the Orchestration Overhead Index
(\OOI{}), which isolates the additional energy cost induced by multi-step
orchestration relative to linear execution on the same goal instances
under identical task and success criteria, enabling consistent measurement across agentic and non-agentic systems.

Across five reasoning task families (factual QA, science QA, arithmetic
reasoning, multi-step reasoning, and logical reasoning) and three
tool-augmented task families with hardware-level \RAPL{} energy profiling,
we find that agentic workflows consume $\headlineOOIClean\times$ higher
mean energy per successful goal compared to equivalent linear baselines
($\meanEpGAgClean$\,J vs $\meanEpGLinClean$\,J). This overhead is not driven
by increased inference compute but by orchestration structure, including
retries, intermediate planning, and recovery behavior. Critically, 
for tool-augmented tasks where agentic dispatch replaces
costly token generation, \OOI{} inverts below $1.0\times$: agentic
execution is strictly cheaper than linear, confirming that the metric
responds to orchestration structure rather than imposing a fixed
upward bias.

These findings establish that energy-per-inference is
insufficient for evaluating modern AI systems: goal-level accounting
via \EpG{} and \OOI{} provides the measurement foundation for
accurate benchmarking and system design in agentic AI workloads,
where orchestration structure, not inference substrate, is the
primary determinant of energy cost.
\end{abstract}

\begin{CCSXML}
<ccs2012>
<concept><concept_id>10010147.10010169.10010170</concept_id>
<concept_desc>Computing methodologies~Artificial intelligence</concept_desc>
<concept_significance>500</concept_significance></concept>
<concept><concept_id>10010147.10010169.10010170.10010171</concept_id>
<concept_desc>Computing methodologies~Multi-agent systems</concept_desc>
<concept_significance>500</concept_significance></concept>
<concept><concept_id>10010583.10010786.10010787</concept_id>
<concept_desc>Hardware~Power and energy</concept_desc>
<concept_significance>500</concept_significance></concept>
<concept><concept_id>10010520.10010553.10010562</concept_id>
<concept_desc>Computer systems organization~Energy-aware systems</concept_desc>
<concept_significance>500</concept_significance></concept>
<concept><concept_id>10011007.10010940.10010971.10010972</concept_id>
<concept_desc>Software and its engineering~Software performance</concept_desc>
<concept_significance>500</concept_significance></concept>
<concept><concept_id>10011007.10011074.10011076</concept_id>
<concept_desc>Software and its engineering~Software verification and validation</concept_desc>
<concept_significance>300</concept_significance></concept>
<concept><concept_id>10011007.10011074.10011078</concept_id>
<concept_desc>Software and its engineering~Software notations and tools</concept_desc>
<concept_significance>300</concept_significance></concept>
<concept><concept_id>10002951.10002952.10002953</concept_id>
<concept_desc>Information systems~Database management systems</concept_desc>
<concept_significance>100</concept_significance></concept>
<concept><concept_id>10010520.10010521.10010542</concept_id>
<concept_desc>Computer systems organization~Real-time systems</concept_desc>
<concept_significance>100</concept_significance></concept>
<concept><concept_id>10010405.10010406.10010407</concept_id>
<concept_desc>General and reference~Measurement</concept_desc>
<concept_significance>500</concept_significance></concept>
</ccs2012>
\end{CCSXML}
\ccsdesc[500]{Computing methodologies~Artificial intelligence}
\ccsdesc[500]{Computing methodologies~Multi-agent systems}
\ccsdesc[500]{Hardware~Power and energy}
\ccsdesc[500]{Computer systems organization~Energy-aware systems}
\ccsdesc[500]{Software and its engineering~Software performance}
\ccsdesc[500]{General and reference~Measurement}
\ccsdesc[300]{Software and its engineering~Software verification and validation}
\ccsdesc[300]{Software and its engineering~Software notations and tools}
\ccsdesc[100]{Information systems~Database management systems}
\ccsdesc[100]{Computer systems organization~Real-time systems}

\keywords{energy measurement, agentic AI, RAPL, EpG, orchestration overhead,
          LLM benchmarking, reproducibility, green AI}
\maketitle

\section{Introduction: The Metrology Failure}
\label{sec:intro}

\subsection{The Wrong Unit}

Energy-per-inference has become the de facto unit of AI energy 
accountability~\cite{strubell2019energy,patterson2021carbon,oviedo2025energyinference,jegham2025howhungry,chung2025mlenergy}.
For classical batch workloads, one prompt in and one response out, the unit is coherent because inference count is 
fixed by the task definition.

In agentic systems, this assumption breaks: a single user goal may trigger multi-step orchestration, 
conditional tool calls, retry sequences, and failure-recovery cycles~\cite{oviedo2025energyinference,chien2023reducingcarbon} 
whose depth is determined at runtime rather than by specification. As a result, inference count reflects implementation behavior 
rather than task structure.

Normalizing by inference count measures the cost of individual execution steps, not the cost of completing a goal. 
This creates a mismatch between measurement unit and task semantics. When inference count is used to normalize energy 
in systems where inference count itself is an implementation variable, 
the unit fails this requirement. We refer to this as a unit misalignment.

\subsection{The Forcing Example}

Consider two systems solving the same task. System~A succeeds on its
first inference attempt. System~B fails four attempts before succeeding
on the fifth (Figure~\ref{fig:forcing}). The resulting $5\times$
amplification is a controlled thought experiment.\footnote{The forcing 
example is a controlled-thought experiment intended to expose the 
structural failure of inference-level accounting under retry. 
Section~\ref{sec:validation} reports empirically observed amplification 
on real workflows; the headline result ($\HEADLINEEPG$, 
see \S\ref{sec:validation}) is the quantity that should be cited by 
readers seeking a real measurement.}
For clarity of exposition, assume each inference step has comparable
energy cost $E_{\mathrm{inf}}$ under identical hardware conditions.
In practice, failed attempts consume more energy than successful ones
because the model exhausts more computation before producing an
invalid output, making the real divergence larger than the
 $5\times$ shown here.

Under energy-per-inference accounting, both systems appear equivalent, 
since only successful or executed inference steps are counted uniformly. 
However, total workflow energy differs: System~A consumes $E_{\mathrm{inf}}$, 
while System~B consumes approximately $5E_{\mathrm{inf}}$.

This discrepancy arises because inference-level accounting does not attribute failed attempts 
or retry sequences to the originating user goal. In contrast, \EpG{} (Energy per Successful Goal) aggregates 
energy across all attempts associated with goal completion, 
capturing this divergence explicitly. \OOI{} (Orchestration 
Overhead Index) expresses this gap as a ratio relative to a 
matched linear baseline; both are formally defined in 
Sections~\ref{sec:epg} and~\ref{sec:ooi}.

 
\begin{figure}[t]
\centering
\begin{tikzpicture}[
  >=Stealth, font=\scriptsize,
  attempt/.style={
    draw, rounded corners=3pt,
    minimum width=1.1cm, minimum height=0.7cm,
    align=center, inner sep=3pt, line width=0.4pt},
  ok/.style={attempt,
    fill=green!18, draw=green!50!black},
  fail/.style={attempt,
    fill=red!10, draw=red!40},
  lbl/.style={font=\small\bfseries},
  tag/.style={
    draw, rounded corners=3pt,
    fill=orange!10, draw=orange!55,
    font=\scriptsize\bfseries,
    text=orange!70!black, inner sep=4pt,
    line width=0.5pt},
  metric/.style={
    font=\tiny, align=left,
    text=gray!70},
  node distance=0.18cm and 0.12cm,
]
 
\node[lbl, text=blue!60!black] (la) {A};
\node[ok, right=0.35cm of la] (a1)
  {$E_{\inf}$\\\tiny\checkmark};
\node[metric, right=0.30cm of a1]
  {E/inf:\;$E_{\inf}$\\\EpG:\;$E_{\inf}$};
 
\node[lbl, text=red!60!black, below=0.70cm of la] (lb) {B};
\node[fail, right=0.35cm of lb]           (b1) {$E_{\inf}$\\\tiny$\times$};
\node[fail, right=0.12cm of b1]           (b2) {$E_{\inf}$\\\tiny$\times$};
\node[fail, right=0.12cm of b2]           (b3) {$E_{\inf}$\\\tiny$\times$};
\node[fail, right=0.12cm of b3]           (b4) {$E_{\inf}$\\\tiny$\times$};
\node[ok,   right=0.12cm of b4]           (b5) {$E_{\inf}$\\\tiny\checkmark};
\node[metric, right=0.30cm of b5]
  {E/inf:\;$E_{\inf}$\\\EpG:\;$5E_{\inf}$};
 
\draw[gray!25, dashed, line width=0.4pt]
  ([yshift=-0.32cm]la.south west) --
  ([yshift=-0.32cm]la.south west -| b5.east);
 
\draw[<->, gray!45, line width=0.5pt]
  ([xshift=-0.12cm]a1.west) -- ([xshift=-0.12cm]b1.west)
  node[midway, left=2pt, font=\tiny, gray!55] {same};
 
\node[tag, below=0.35cm of b3]
  {$5\times$ \EpG{} divergence};
 
\node[right=0.30cm of b5, minimum width=2.2cm] {};
 
\end{tikzpicture}
\caption{Two systems with identical energy-per-inference diverge by
  $5\times$ under \EpG{}; see \S\ref{sec:validation} for measured aggregates. Failed attempts are invisible to
  inference-level accounting; \EpG{} captures them by construction.}
\label{fig:forcing}
\end{figure}

\subsection{A Real Agentic Trace}

The forcing example is illustrative. Consider a concrete agentic execution
on a GSM8K arithmetic problem (Figure~\ref{fig:ontology}). The system issues
a planning call, dispatches two tool invocations, encounters a JSON parse
failure, retries, and synthesizes the answer.

\begin{figure}[t]
\centering
\begin{tikzpicture}[
  >=Stealth, font=\scriptsize,
  gn/.style={draw, rounded corners=4pt, minimum width=4.0cm,
             minimum height=0.65cm, align=center, inner sep=4pt,
             fill=blue!25, draw=blue!60, font=\scriptsize\bfseries},
  wf/.style={draw, rounded corners=4pt, minimum width=4.0cm,
             minimum height=0.65cm, align=center, inner sep=4pt,
             fill=blue!10, draw=blue!40, font=\scriptsize\bfseries},
  ok/.style={draw, rounded corners=4pt, minimum width=2.4cm,
             minimum height=1.1cm, align=center, inner sep=4pt,
             fill=green!18, draw=green!55},
  fl/.style={draw, rounded corners=4pt, minimum width=2.4cm,
             minimum height=1.1cm, align=center, inner sep=4pt,
             fill=red!10, draw=red!45},
  wa/.style={draw, rounded corners=4pt, minimum width=2.4cm,
             minimum height=1.1cm, align=center, inner sep=4pt,
             fill=orange!15, draw=orange!55},
  arr/.style={-{Stealth[length=5pt]}, thick},
  earr/.style={-{Stealth[length=4pt]}, gray!60, dashed},
  node distance=0.6cm and 0.3cm,
]
  \node[gn] (g) {Goal: ``Solve GSM8K arithmetic problem''\\
    \scriptsize\normalfont(exp.\ 946, agentic, llama\_cpp, failure\_injection)};

  \node[wf, below=0.7cm of g] (w)
    {Workflow Unit: one episode, one retry};

  \node[fl, below left=0.8cm and 1.2cm of w] (a1)
    {\textbf{Attempt 1}\\[-1pt]
     \tiny controlled failure injected\\[-1pt]
     \textbf{2{,}256\,J} \textcolor{red!70}{\small wasted}};

  \node[ok, below right=0.8cm and 1.2cm of w] (a2)
    {\textbf{Attempt 2}\\[-1pt]
     \tiny\checkmark\ success\\[-1pt]
     \textbf{1{,}358\,J} \textcolor{green!50!black}{\small useful}};

  \draw[arr] (g) -- (w);
  \draw[arr] (w) -- (a1);
  \draw[arr] (w) -- (a2);

  \draw[earr, bend right=20] (a1.east) to
    node[above, font=\tiny\bfseries, gray!60] {retry} (a2.west);

  \draw[decorate, decoration={brace, amplitude=5pt, mirror},
        thick, blue!50]
    ([xshift=-3pt,yshift=-4pt]a1.south west) --
    ([xshift=3pt,yshift=-4pt]a2.south east)
    node[midway, below=8pt, font=\scriptsize\bfseries, blue!70]
      {$\EpG = 2{,}256 + 1{,}358 = \mathbf{3{,}614\,J}$
       \quad (1 successful goal)};

  \node[font=\tiny\bfseries, red!80, align=center, text width=2.5cm]
    at ([yshift=-0.6cm]a1.south)
    {62.4\% of \EpG{} wasted\\on failed attempt};

  \node[font=\tiny\bfseries, orange!80!black, align=center, text width=2.8cm]
    at ([yshift=-0.6cm]a2.south)
    {E/inf counts only this:\\1{,}358\,J -- misses 62\%};

\end{tikzpicture}
\caption{Goal, workflow unit, and retry accumulation on a real agentic run
  (exp.\ 946, GSM8K-B, llama\_cpp, failure\_injection).
  One goal maps to one workflow unit with one retry: a failed
  attempt (2{,}256\,J) followed by a successful attempt (1{,}358\,J).
  \EpG{} = 3{,}614\,J accumulates both; energy-per-inference counts
  only the successful attempt, missing 62.4\% of true cost.}
\label{fig:ontology}
\end{figure}

The retry phase consumes more energy than the successful attempt that
follows it: in the canonical trace, the failed attempt draws
\traceAttemptOneJ\,J against \traceAttemptTwoJ\,J for the successful
attempt, yet inference-level benchmarks exclude failed attempts entirely
by construction. During the LLM API
wait phase in remote-inference runs, overall task power drops to
$\overallPowerApi$\,W compared with $\planPowerApi$\,W during active
planning(Table~\ref{tab:phase_power}), a reduction that
remains invisible without phase-level measurement~\cite{jegham2025howhungry,kneese2024tailpipe}. Methods that allocate energy purely by elapsed duration
therefore overstate inference cost while understating orchestration and
coordination overhead. These distortions compound across multi-step
agentic workflows.

\subsection{Four Problems with Current Measurement}

Energy-per-inference is not a valid unit for multi-step agentic workflows.
A planner issuing four LLM calls and two retries is not ``six inferences
worth of energy'': it is one goal-level computation whose energy is
distributed across planning, execution, waiting, and recovery phases.
Four distinct failure modes arise in current practice.

\noindent\textbf{The unit problem.} Energy-per-inference counts implementation
steps, not goal completions. A system that retries four times before
succeeding consumes $5\times$ the energy of one that succeeds immediately,
yet both report identical energy-per-inference.

\noindent\textbf{The boundary problem.} Tools that allocate energy as
TDP$\times$wall-time include post-task framework teardown in the
measurement window. Teardown is a fixed absolute cost shared by both
workflow types. Because linear workflows complete faster, this fixed
cost represents a larger fraction of reported linear energy than of
reported agentic energy — inflating the denominator of \OOI{} more
than the numerator, driving the ratio toward $1.0\times$ regardless
of the true orchestration overhead~\cite{codecarbon,google_carbon_methodology}.

\noindent\textbf{The attribution problem.} Raw package energy conflates idle system
draw, concurrent process activity, and workload-induced consumption.
Without explicit baseline subtraction and CPU-fraction isolation, the
measurement reflects the machine, not the task~\cite{thamm2025raplworkflows}.

\noindent\textbf{The reproducibility problem.} Without binding measurements to
hardware identity, governor policy, and runtime configuration, the same
workload on the same machine produces different numbers across runs,
making cross-paper comparison meaningless~\cite{mytkowicz2009producing,pineau2021improving}.

Sections~\ref{sec:boundary} through~\ref{sec:repro} address each in turn;
Section~\ref{sec:epg} defines the goal-level unit that resolves the unit
misalignment.

\subsection{Why Now}
\label{sec:whynow}
 
The measurement failures described above are not abstract concerns,
and they are not future risks. No published agentic AI benchmark, to our knowledge, 
normalizes energy by successfully completed goals rather than inference 
invocations~\cite{oviedo2025energyinference,jegham2025howhungry}. 
The mismatch is fundamental: inference-level normalization conflates implementation steps with goal completions, 
hiding retry amplification, orchestration overhead, and failure recovery costs.
Agentic AI deployments are scaling at a pace that makes the absence
of a valid energy unit increasingly problematic. Global AI inference
demand is projected to grow from 15\,TWh in 2025 to 347\,TWh by
2030~\cite{schneider2025aienergy}, and agentic systems consume
substantially more energy per interaction than single-turn inference
workloads~\cite{chen2026networking,allianz2026aiboom}. The shift from
single-pass inference to closed-loop iterative reasoning changes the
primary energy bottleneck from computation to orchestration
overhead~\cite{chen2026networking}, a change that inference-level
benchmarks are structurally blind to.
 
Regulatory timelines compound the urgency. The EU AI Act entered
into force in August 2024 and reaches full enforcement in August
2026, with energy reporting obligations for General Purpose AI models
and high-risk AI systems~\cite{euaiact2024,whitecase2025euaiact}.
The EU data-centre energy efficiency package, expected in early 2026,
will mandate per-system energy KPIs~\cite{europarl2025aienergy}.
California's SB\,253 requires Scope\,1--3 emissions disclosure for
companies over \$1\,billion in revenue starting
2026~\cite{schneider2025aienergy}. These frameworks require energy
metrics that are reproducible, comparable across systems, and aligned
with user-visible outcomes. Energy-per-inference satisfies none of
these requirements for agentic workloads.
 
This issue also appears in deployed systems. A 280-fold decline in
inference costs since 2022 has driven a rebound effect: cheaper
inference enables more agentic deployments, each consuming more
energy per user goal than the single-shot queries they replace.
Efficiency gains at the inference level cannot offset this
growth~\cite{allianz2026aiboom}. Without a goal-level energy unit,
neither system designers nor regulators can measure the true
cost of agentic AI at the scale it is now operating.

\subsection{The PUE Analogy}

The datacenter community faced a similar unit misalignment in the early
era of large-scale computing. Power Usage Effectiveness (PUE) emerged as
a practical remedy: operationally defined, empirically measurable, and
widely adopted despite well-known opportunities for metric gaming%
~\cite{barroso2007energy,pue_greengrids}.

\EpG{} and \OOI{} play complementary roles for agentic AI. \EpG{} is
an absolute unit, joules per successful goal, reportable for any
system in isolation. \OOI{} is a comparative ratio requiring a matched
linear baseline; EpG and OOI play analogous roles to PUE, as
both isolate a specific overhead layer relative to a productive baseline,
facility overhead in PUE and orchestration overhead in \OOI{}. Alternative
datacenter metrics such as WUE normalize a different resource against
energy output and do not share this ratio structure. Like PUE, \OOI{}
is most meaningful when measurement conditions are controlled and
 reported~\cite{jegham2025howhungry,chien2023reducingcarbon}.

\subsection{Contributions}

\begin{enumerate}[leftmargin=*]

\item \textbf{\EpG{} (Energy per Successful Goal)} (\S\ref{sec:epg}):
A goal-level energy unit that normalizes total workflow energy,
including failed attempts and retries, by the number of successfully
completed goals.

\item \textbf{Measurement procedure} (\S\ref{sec:boundary},
\S\ref{sec:alems}): a temporal boundary model ($\tzero/\tone/\ttwo$)
combined with a five-layer attribution hierarchy (L0--L4) that maps \RAPL{} signals to workflow-level energy, 
tagging each quantity as \textsc{measured}, \textsc{calculated}, or 
\textsc{inferred} at every layer.

\item \textbf{Three-hash reproducibility protocol} (\S\ref{sec:repro}):
$\Hhard$ (hardware fingerprint), $\Henv$ (software environment), and
$\Hrun$ (execution state) jointly bind each measurement to its exact
hardware and software context, enabling reproducible energy reporting.

\item \textbf{\OOI{} (Orchestration Overhead Index)} (\S\ref{sec:ooi}):
a normalized ratio comparing agentic and linear workflows under identical
task definitions and success criteria, isolating the energy cost of
orchestration itself.

\item \textbf{Empirical validation} (\S\ref{sec:validation}):
systematic evaluation across five claims --- measurement validity,
reproducibility, boundary correctness, discriminative power, and
orchestration dominance --- demonstrating \OOI{} $=\headlineOOIClean\times$
across $\nCanonicalAgentic{}$ agentic goals, with tool-task inversion
confirming directional correctness of the metric.

\end{enumerate}

\section{Measurement Ontology}
\label{sec:ontology}

Before defining how to measure agentic energy, we define \emph{what} is
being measured. Three primitives constitute the minimum ontological
commitments required to support a goal-level energy unit. They are not
engineering choices specific to \ALEMS{}; any apparatus claiming to
measure goal-level energy must adopt definitions of this kind.

\subsection{Goal}
\label{sec:goal}

A \emph{goal} is one unit of user intent, corresponding to one
prompt-answer pair as the user perceives it. The goal is defined by
what the user asks for and what the evaluation function accepts as a
correct answer; it is independent of how the system internally
satisfies it. The number of inferences, tool calls, retries, or
intermediate planning steps the system performs is implementation
behavior, not part of the goal. \EpG{} is defined at the atomic level: a single user intent evaluated
by a well-defined success predicate under a fixed measurement boundary.

A single user prompt is one goal regardless of its
internal structure. The arithmetic prompt ``Solve: 23 + 47'' is one
goal. The composite prompt ``Plan a 3-day Tokyo itinerary, then
output it as JSON'' is also one goal: the system either produces the
requested JSON itinerary that the evaluation function accepts
(success) or it does not (failure). Sub-questions within a single
prompt do not split into separate goals; they are internal task
structure that the system must address to deliver the single
accepted answer.

Two cases require explicit treatment. First, when a benchmark
 asks for $K$ independent answers via $K$ separate prompts
(e.g., a 100-question GSM8K subset), those are $K$ goals.
Second, when a single prompt contains multiple semantically
independent questions and the evaluation function requires all to be
answered (e.g., ``answer the following three questions: a, b, c''),
this is one goal whose evaluation is conjunctive: success requires
all sub-answers to be correct.

Concretely, the goal granularity in this paper is determined by the
benchmark specification (one row of the GSM8K dataset = one goal),
not by the system's internal decomposition.
Table~\ref{tab:tasks} lists all task families used in this paper
and their corresponding goal definitions.

\subsection{Workflow Unit}

A goal as defined in Section~\ref{sec:goal} is a specification: it
says what the user wants. A \emph{workflow unit} is the corresponding
measured artifact: the actual sequence of system activity executed in
service of one goal, with its energy, timing, and outcome recorded(Figure~\ref{fig:ontology}).

The relationship is one-to-one in the simplest case. When the system
succeeds on the first attempt with no retries, the workflow unit
contains exactly one attempt, and the workflow unit aligns naturally
with the goal: one prompt, one execution, one outcome. In this
case the distinction is mostly bookkeeping.

Retry behavior makes this distinction important. When a system retries after a JSON parse error, 
tool failure, hallucination filter, or timeout, the goal remains unchanged but the workflow unit 
contains multiple attempts: one or more failed attempts followed by either success or abandonment. 
Each attempt consumes energy, so workflow energy is defined as the sum across all attempts:

\[
E_{\mathrm{workflow}} = \sum_{i=1}^{n} E_{\mathrm{attempt},i}
\]

where $n$ is the number of attempts in this workflow unit ($n \geq
1$). The goal-success indicator $\mathds{1}_{\text{success}}$ is
$1$ if the final attempt succeeded and $0$ otherwise; Every
workflow unit is counted regardless of outcome; only 
successful ones contribute to the successful-goal count. Figure~\ref{fig:ontology} illustrates this accumulation 
on a real agentic run (exp.\ 946, GSM8K-B): one failed attempt wastes 2{,}256\,J 
before a successful retry adds 1{,}358\,J, giving \EpG{}$=3{,}614$\,J for one completed goal.

This behavior arises because \EpG{} captures retry-driven energy
amplification: the workflow unit accumulates attempts in the
numerator while the denominator counts only the goals that
ultimately succeeded. A workflow unit that succeeds on its fifth
attempt contributes five attempts' worth of energy and one
successful goal; a workflow unit that succeeds on its first attempt
contributes one attempt's worth of energy and one successful goal.
The $5\times$ ratio between them is invisible to inference-level
accounting and exactly visible to \EpG{}.

\subsection{Successful Completion}
\label{sec:ontology:success}

Given the definitions of goal and workflow unit, the remaining
ontological commitment is when a delivered result counts as
\emph{success}.

A goal is \emph{successfully completed} when the system's final
output is accepted by an evaluation function specified in advance of
execution. The pre-specification requirement is critical: post-hoc
redefinition of what counts as success would allow systems to game
\EpG{} by retroactively widening the success criterion until any
delivered output qualifies.

The evaluation function depends on the benchmark. For GSM8K
\cite{cobbe2021gsm8k}, the function is exact-match comparison
against the ground-truth integer answer extracted from the dataset.
For tool-graph tasks, the function is a deterministic validator that
checks whether the resulting state matches the specification. For 
science-QA tasks with multiple acceptable phrasings, the function is
a normalized string match against any reference answer in the
accepted set. Table~\ref{tab:tasks} lists all task families used in
this paper alongside their evaluation functions and success criteria.

Three edge cases are resolved by convention:
(a)~success is binary --- the evaluation function returns $0$ or $1$
and there is no partial credit at the goal level 
(b)~if a workflow unit produces multiple candidate outputs (e.g.,
self-consistency sampling), success requires only that at least one
output is accepted, but all outputs' energy is counted in the
workflow unit;
(c)~a workflow unit that times out, exceeds the retry budget, or
otherwise terminates without an accepted output counts as
\emph{failed}, not omitted: its energy is counted in the numerator
and contributes zero to the successful-goal denominator. This
ensures that giving up to save energy does not improve the reported
\EpG{}.

\section{Measurement Boundary Model}
\label{sec:boundary}

Energy is an integral over time,only well-defined once integration limits
are fixed. \emph{Boundary choice is a metrological commitment.}
Inconsistent boundaries cause systematic error that compounds across
replications.

\subsection{Execution Boundaries}
\label{sec:workflowenergy}

Every energy measurement implicitly commits to an integration
window; without an explicit boundary, reported values are
incomparable across tools and runs. \ALEMS{} defines three
ordered boundaries that partition each run into distinct
intervals (Figure~\ref{fig:timeline}):

\begin{figure}[h]
\centering
\begin{tikzpicture}[>=Stealth, font=\small, node distance=0cm]
  \draw[very thick] (0,0) -- (10,0);

  \foreach \x/\lab in {
    0.8/{t_{\mathrm{pre}}}, 2.2/{\tzero}, 6.8/{\tone}, 9.2/{\ttwo}}
  {
    \draw[thick] (\x,-0.2) -- (\x,0.2);
    \node[above=4pt, font=\small] at (\x,0.2) {$\lab$};
  }

  \fill[blue!18]   (0.8, 0.02) rectangle (2.2,-0.7);
  \fill[green!22]  (2.2, 0.02) rectangle (6.8,-0.7);
  \fill[orange!20] (6.8, 0.02) rectangle (9.2,-0.7);

  \node[font=\scriptsize\bfseries, align=center, text=blue!60!black]
    at (1.5,-0.35) {Pre-task\\diagnostic};
  \node[font=\small\bfseries, align=center, text=green!40!black]
    at (4.5,-0.35) {\textbf{Attribution window} $[\tzero,\tone]$};
  \node[font=\scriptsize\bfseries, align=center, text=orange!60!black]
    at (8.0,-0.35) {Post-task\\excluded};

  \draw[dashed, gray!50, thick] (3.8,0.02) -- (3.8,-0.7);
  \node[font=\tiny, gray!60] at (3.1,-0.85) {setup};
  \node[font=\tiny, gray!60] at (5.5,-0.85) {execution + retries};

  \draw[->] (0.8,-1.1) -- (0.8,-0.72);
  \node[below=2pt, font=\tiny, align=center] at (0.8,-1.1)
    {RAPL counter read\\(context baseline)};

  \draw[->] (2.2,-1.5) -- (2.2,-0.72);
  \node[below=2pt, font=\tiny, align=center] at (2.2,-1.5)
    {\texttt{start\_measurement()}\\attribution begins};

  \draw[->] (6.8,-1.1) -- (6.8,-0.72);
  \node[below=2pt, font=\tiny, align=center] at (6.8,-1.1)
    {executor returns\\attribution ends};

  \draw[->] (9.2,-1.1) -- (9.2,-0.72);
  \node[below=2pt, font=\tiny, align=center] at (9.2,-1.1)
    {ETL + DB writes\\complete};

\end{tikzpicture}
\caption{Measurement boundary model. Three ordered anchors 
$t_{\mathrm{pre}}$, $\tzero$, $\tone$, $\ttwo$ partition 
each run into pre-task diagnostic, attributed task 
$[\tzero,\tone]$, and post-task excluded windows.}
\label{fig:timeline}
\end{figure}"

Total run energy follows directly from two hardware counter
reads at $\tzero$ and $\tone$: $E_{\mathrm{task}} =
\mathrm{RAPL}(\tone) - \mathrm{RAPL}(\tzero)$. This value
is exact and requires no intermediate sampling. An agentic
workflow, however, spans structurally distinct phases:
planning, LLM execution, tool dispatch, and synthesis.
Understanding which phase drives energy consumption is
central to the thesis. Intermediate \RAPL{} samples within
$[\tzero,\tone]$ provide the temporal resolution needed to
attribute $E_{\mathrm{task}}$ across these phases. Without phase-level
samples, the total energy is correct but opaque --- we know
how much was consumed but not by which phase. Since
orchestration dominance is the central empirical claim,
phase attribution is essential. Coverage $\mathcal{C}$
measures how completely the sample stream spans the
attribution window, and therefore how reliably phase
boundaries can be resolved. Concretely, each \RAPL{} sample spans an interval 
$[s_i, e_i]$ within $[\tzero,\tone]$; at 100\,Hz the 
sampler collects approximately \empiricalSampleRateHz{} 
samples per second, each covering $\approx$10\,ms 
(empirically validated in Section~\ref{sec:c1}). 
Coverage is the fraction of $[\tzero,\tone]$ these 
intervals collectively span:
\begin{equation}
C = \frac{\left|\left(\bigcup_i [s_i,e_i]\right)
\cap[\tzero,\tone]\right|}{\tone-\tzero}\times100\,\%
\label{eq:coverage}
\end{equation}
A run with low $C$ reports correct total energy but leaves
phase boundaries unresolved --- the energy is accounted for
but not attributed. At gold tier ($C\geq95\%$), at most
5\% of the execution window lacks a sample, bounding the
maximum unresolved phase interval to \maxSampleGapMs\,ms
observed across all \nTotalRuns{} runs. Runs below $C=80\%$
are excluded from phase-level analyses; the full empirical
distribution over the canonical dataset is reported in
Section~\ref{sec:c1}.

\subsection{Workflow Energy}
\label{sec:execboundaries}

Each run decomposes into three non-overlapping energy components:

\begin{equation}
E_{\mathrm{workflow}} =
\underbrace{\int_{\tzero}^{\tone} p(t)\,dt}_{E_{\mathrm{task}}}
+
\underbrace{\int_{t_{\mathrm{pre}}}^{\tzero} p(t)\,dt}_{E_{\mathrm{pre}}}
+
\underbrace{\int_{\tone}^{\ttwo} p(t)\,dt}_{E_{\mathrm{post}}}
\label{eq:eworkflow}
\end{equation}

$E_{\mathrm{task}}$ is the primary attribution target and the
sole component entering \EpG{} (Section~\ref{sec:epg}).
$E_{\mathrm{pre}}$ captures system activity during the
pre-task diagnostic window; $E_{\mathrm{post}}$ covers
framework teardown after executor return. Both are recorded
for diagnostic purposes and excluded from \EpG{}. How each
component is computed from hardware signals is defined in
Section~\ref{sec:alems}; empirical magnitudes are reported
in Section~\ref{sec:c3}.

\subsection{Boundary Failure Modes}
\label{sec:boundaryfailures}

The decomposition in Equation~\ref{eq:eworkflow} makes
boundary choice concrete; tools that omit this commitment
conflate $E_{\mathrm{pre}}$, $E_{\mathrm{post}}$, or
concurrent process energy with $E_{\mathrm{task}}$ resulting in 
four failure modes explained in Table~\ref{tab:boundaryfailures}.

\begin{table}[t]
\scriptsize
\setlength{\tabcolsep}{5pt}
\renewcommand{\arraystretch}{1.2}
\caption{Boundary failure modes, their mechanisms, and \ALEMS{} mitigations.
All four modes introduce systematic bias independent of workload;
the $[\tzero,\tone]$ attribution boundary eliminates all four.
OOI impact: direction of distortion on the \OOI{} ratio if unmitigated.}
\label{tab:boundaryfailures}
\begin{tabularx}{\textwidth}{lXXccX}
\toprule
\textbf{Mode} & \textbf{Mechanism} & \textbf{Affected Tools} &
\textbf{Energy Bias} & \textbf{OOI Impact} & \textbf{\ALEMS{} Mitigation} \\
\midrule
\rowcolor{red!8}
Truncation &
Measurement stops at first response; retries and multi-step paths excluded from energy total &
Single-shot benchmarks~\cite{mlperf} &
Under-count &
OOI $\downarrow$ (agentic penalised) &
$\tone$ set at executor return, not first token; all retries within $[\tzero,\tone]$ \\
\rowcolor{orange!8}
Inflation &
Post-task framework activity (ETL, logging, teardown) included in measurement window &
TDP$\times$wall-time tools~\cite{codecarbon,google_carbon_methodology} &
Over-count &
OOI $\rightarrow 1$ (linear inflated more) &
$E_{\mathrm{post}}$ excluded by $\tone$ boundary; empirically validated in \S\ref{sec:c3} \\
\rowcolor{yellow!8}
Overlap &
Package-level \RAPL{} includes concurrent processes sharing CPU cores &
Process-unaware profilers~\cite{scaphandre} &
Over-count &
OOI undetermined &
L2 per-process CPU fraction $f_{\mathrm{cpu}}$ isolates workload contribution (\S\ref{sec:alems}) \\
\rowcolor{blue!6}
Idle conflation &
System idle draw attributed to workload without baseline subtraction &
Raw RAPL readers~\cite{barroso2007energy,mahapatra2005slack} &
Over-count &
OOI $\rightarrow 1$ (short runs dominated) &
$2\sigma$ idle-window baseline subtracted per run (\S\ref{sec:alems}) \\
\bottomrule
\end{tabularx}
\end{table}

Truncation bias is inherent to single-shot inference
benchmarks~\cite{mlperf}, which predate agentic multi-step
execution. The remaining three modes affect any tool that
does not define an explicit $[\tzero,\tone]$ attribution
window~\cite{codecarbon,thamm2025raplworkflows}. Together
they can produce order-of-magnitude errors on short
workloads where idle draw dominates and framework overhead
is non-negligible relative to task energy.

\section{\ALEMS: Five-Layer Energy Observation Model}
\label{sec:alems}

Raw hardware energy counters are necessary but not sufficient for
workflow-level energy attribution. A single \RAPL{} read spanning a
complete agentic run conflates idle system draw, background process
activity, and the target workflow into one undifferentiated number.
Separating these contributions requires a sequence of transformations,
each addressing a distinct observability gap that the previous layer
cannot resolve. \ALEMS{} formalizes this sequence as five layers,
and illustrated with canonical trace values in
Figure~\ref{fig:obsmodel} and Table~\ref{tab:registry}. The current
implementation measures the local CPU package; the full \EpG{}
definition in Section~\ref{sec:epg} encompasses GPU, NIC, and remote
inference energy, and this paper reports the local-CPU component.

\begin{figure}[t]
\centering
\begin{tikzpicture}[
  layer/.style={draw, rounded corners=4pt,
                minimum width=\columnwidth-2.5cm,
                minimum height=0.65cm,
                font=\scriptsize, align=center, inner sep=5pt},
  arr/.style={-{Stealth[length=4pt]}, thick, gray!55},
  val/.style={font=\tiny\bfseries, text=blue!70!black,
              align=left, minimum width=2.3cm},
  node distance=0.15cm,
]
  \node[layer, fill=blue!28] (L0)
    {$\mathbf{L_0}$: Hardware \quad
     $\Epkg = E_{\mathrm{core}}+E_{\mathrm{uncore}}+E_{\mathrm{dram}}$
     \quad \textsc{measured}};
  \node[layer, fill=blue!16, below=of L0] (L1)
    {$\mathbf{L_1}$: Baseline isolation \quad
     $\Edyn = \max(0,\;\Epkg - P_{\mathrm{base}}\cdot\Delta t)$
     \quad \textsc{calculated}};
  \node[layer, fill=cyan!22, below=of L1] (L2)
    {$\mathbf{L_2}$: Process attribution \quad
     $E_{\mathrm{attr}} = \fcpu\cdot\Edyn$, \;
     $\fcpu = \Delta\mathrm{ticks}_{\mathrm{pid}}/\Delta\mathrm{ticks}_{\mathrm{total}}$
     \quad \textsc{calculated}};
  \node[layer, fill=teal!18, below=of L2] (L3)
    {$\mathbf{L_3}$: Phase decomposition \quad
     $E_{\mathrm{attr}} = \eplan+\eexec+\esyn+\egap$
     \quad \textsc{calculated}};
  \node[layer, fill=green!12, below=of L3] (L4)
    {$\mathbf{L_4}$: Goal aggregation \quad
     $\EpG = \sum_i E_{\mathrm{attr}}^{(i)} / N_{\mathrm{success}}$
     \quad \textsc{calculated}};
  \foreach \a/\b in {L0/L1,L1/L2,L2/L3,L3/L4}
    \draw[arr] (\a.south) -- (\b.north);
  \node[val, right=6pt of L0] {\tracePkgJ\,J};
  \node[val, right=6pt of L1] {\traceDynJ\,J};
  \node[val, right=6pt of L2] {\traceAttrJ\,J};
  \node[val, right=6pt of L3] {\tracePlanJ$+$\traceExecJ\\
                                $+$\traceSynJ$+$\traceGapJ\,J};
  \node[val, right=6pt of L4] {\traceEpGJ\,J\\OOI\,$=\traceOOI\times$};
\end{tikzpicture}
\caption{Five-layer attribution hierarchy with provenance tiers.
  Right-hand values trace a single canonical agentic run
  (run~\traceRunAg, exp~\traceExpId, \texttt{gsm8k\_basic},
  \texttt{llama\_cpp}) through all five layers to \EpG{}.
  The gap term (\traceGapJ\,J) captures retry and inter-phase
coordination energy,
  the dominant energy cost in agentic workflows.}
\label{fig:obsmodel}
\end{figure}

\subsection{L0 --- Raw Hardware Energy}
\label{sec:l0}

\RAPL~\cite{david2010rapl,intel_rapl} exposes package-level energy via
model-specific registers (MSRs) at approximately 1\,mJ
resolution~\cite{thamm2025raplworkflows,koszczal2025cpugpuenergy}:

\begin{equation}
  \Epkg = E_{\mathrm{core}} + E_{\mathrm{uncore}} + E_{\mathrm{dram}}
\end{equation}

The counters are updated by the processor independently of user-space
activity and are not influenced by application-level instrumentation
or scheduling decisions. This makes L0 the physical grounding signal
from which all subsequent transformations derive. The measurement scope
is the local CPU package and excludes GPU execution, network interface
energy, and remote inference~\cite{koszczal2025cpugpuenergy,niaz2025cpugpudatatransfer,kulkarni2026ceec}.
This scope is intentional and bounded: the full \EpG{} definition
(\S\ref{sec:epg}) encompasses $(E_{\mathrm{CPU}}+E_{\mathrm{GPU}})_{\mathrm{local}}
+ E_{\mathrm{NIC}} + E_{\mathrm{remote}}$; this paper instantiates
and validates the local-CPU component, which is the dominant term
for CPU-orchestrated agentic workloads running local inference.
This signal is \textsc{measured} directly from hardware counters with no software transformation.

\subsection{L1 --- Baseline-Subtracted Energy}
\label{sec:l1}

Computing systems draw non-negligible power even at idle --- $\avgIdlePkgW$\,W on our platform ---
a well-documented non-proportionality~\cite{barroso2007energy,aquino2025energyindex,bertsch2025energyutilization}.
Without correcting for this floor, short linear runs accumulate
proportionally more idle contamination than longer agentic runs,
compressing OOI toward 1 and understating orchestration overhead.
The correction is:

\begin{equation}
  \Edyn = \max\!\bigl(0,\;\Epkg - P_{\mathrm{baseline}} \cdot \Delta t\bigr)
\end{equation}

where $P_{\mathrm{baseline}}$ is the mean idle power estimated over
multiple non-overlapping sampling windows, with samples beyond two
standard deviations of the mean excluded before averaging. Baseline
sampling runs under CPU affinity constraints to prevent
scheduler-induced core migration from inflating the idle estimate.
On our platform, this correction removes approximately 120\,J from a
typical agentic run. Provenance: \textsc{calculated} from
\textsc{measured} inputs via closed-form arithmetic.

\subsection{L2 --- Process-Level Energy Attribution}
\label{sec:l2}

\begin{equation}
  E_{\mathrm{attr}} = \fcpu \cdot \Edyn, \quad
  \fcpu = \frac{\Delta\mathrm{ticks}_{\mathrm{pid}}}
               {\Delta\mathrm{ticks}_{\mathrm{total}}}
\end{equation}

where $\fcpu$
is the fraction of CPU time consumed by the target process,
derived from \texttt{/proc/\{pid\}/stat} and \texttt{/proc/stat}
counter deltas over $[\tzero, \tone]$, representing
the fraction of CPU time consumed by the target process. Without this
step, concurrent background processes contaminate the energy estimate;
on a lightly loaded research machine the effect is modest, but it is
not negligible and grows with system utilization. The projection is a
heuristic: CPU time is a widely accepted but imperfect proxy for energy
consumption, and memory- or I/O-bound phases may carry energy not
fully captured by $\fcpu$. The limitation is explicit and bounded
by the invariant $E_{\mathrm{attr}} \leq \Edyn$, enforced at runtime.
Figure~\ref{fig:decomp}(b) confirms this invariant holds across all \nCanonicalAgentic{} canonical paired runs.
The projection is \textsc{calculated} via a heuristic transform of \textsc{measured} CPU ticks.

\subsection{L3 --- Phase-Resolved Energy Decomposition}
\label{sec:l3}

L2 produces a single attributed energy total per run. That total is
correct and sufficient for computing \EpG{} and \OOI{}, but it is
opaque: planning, execution, synthesis, retry energy, and framework
overhead all collapse into one number. L3 partitions $E_{\mathrm{attr}}$
across semantic phase windows using direct integration of hardware
energy samples:

\begin{equation}
E_{\mathrm{phase}} = \sum_{i \in \mathcal{S}_{\phi}} \Delta E_i
\end{equation}

where $\mathcal{S}_{\phi}$ denotes the set of sampled energy intervals
whose timestamps overlap phase window $\phi$, and
$\Delta E_i = E^{(i)}_{\mathrm{end}} - E^{(i)}_{\mathrm{start}}$
is the per-interval RAPL delta. Samples are collected at approximately
100\,Hz; phase boundaries from workflow event logs are aligned with
these intervals to recover phase-specific energy. The residual gap
term absorbs all energy outside named phase windows and has two
components: retry energy from failed attempts, and inter-phase orchestration overhead between phases. 
In the canonical trace run (run~\traceRunAg, exp~\traceExpId), the gap accounts for
\traceGapJ\,J, of which \traceRetryJ\,J is retry energy from the
failed attempt~1 and \traceOverheadJ\,J is inter-phase coordination energy.
This decomposition is only possible because L3 measures phases
independently from RAPL samples rather than allocating energy
by time fraction.

The alternative time-fraction approach assigns energy proportional
to phase duration, producing a systematic and measurable error.
On run~\traceRunAg, time-fraction would attribute planning
\tracePlanVone\,J against the measured \tracePlanJ\,J, execution
\traceExecVone\,J against \traceExecJ\,J, synthesis \traceSynVone\,J
against \traceSynJ\,J, and gap \traceGapVone\,J against \traceGapJ\,J.
The error is not random: synthesis draws \traceSynWatts\,W during its 
\traceSynDurS\,s window --- below the run average of 
\traceAvgPowerW\,W --- so time-fraction over-charges it, 
while the gap period draws \traceGapAvgPowerW\,W during 
retry execution, above the run average, causing 
time-fraction to under-charge it. On this run, time-fraction misattributes energy in 
both directions --- overcharging synthesis and 
undercharging the gap --- confirming that phase power is not proportional 
to phase duration: time-fraction is not a valid 
method for phase-level attribution. Had time-fraction been used, the gap's share of
attributed energy would appear as \traceGapPctVone\% rather than
the measured \traceGapPctVtwo\%, understating the true cost of
failed attempts by a measurable margin. Table~\ref{tab:trace}
shows the full v1 vs v2 comparison for all four phases.

Figure~\ref{fig:decomp}(a) shows the phase breakdown aggregated
across all \nCanonicalAgentic{} canonical agentic goals: planning,
synthesis, and gap together dominate, while active execution
contributes the smallest share. Figure~\ref{fig:decomp}(b) confirms
the L1$\to$L2 monotonicity invariant $E_{\mathrm{attr}} \leq
E_{\mathrm{dyn}}$ holds across all \nCanonicalAgentic{} paired runs.
Figure~\ref{fig:decomp}(c) shows the power profile of
run~\traceRunAg{} with the v1 time-fraction counterfactual overlaid:
synthesis draws \traceSynWatts\,W during its \traceSynDurS\,s window
while the gap period draws higher power due to retry execution, yet
time-fraction assigns both the same uniform \traceAvgPowerW\,W.
The integration is \textsc{calculated} directly from \textsc{measured}
RAPL samples and event boundaries, with provenance recorded in the
methodology registry (\S\ref{sec:provenance}).

\begin{figure}[t]
  \includegraphics[width=\columnwidth]{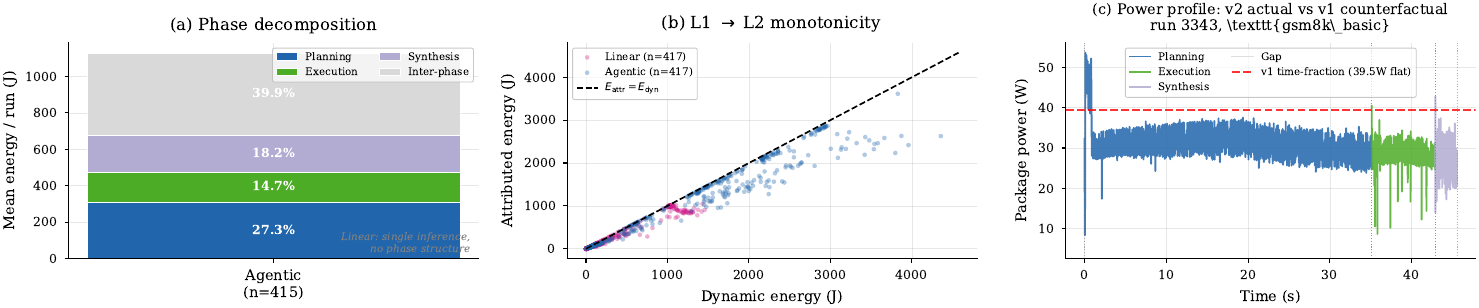}
  \caption{Mean energy per run decomposed by phase (planning, execution,
    synthesis, gap) across all \nCanonicalAgentic{} canonical agentic goals.
    The gap term dominates, confirming that orchestration wait rather than
    active inference drives agentic energy consumption.}
  \label{fig:decomp}
\end{figure}

\subsection{L4 --- Goal-Level Aggregation}
\label{sec:l4}

The four layers above operate at the run level. L4 lifts the result
to the goal level by summing attributed energy across all attempts
belonging to a goal and dividing by the number of successfully
completed goals:

\begin{equation}
  \EpG = \frac{\sum_{j=1}^{A} E_{\mathrm{attr}}^{(j)}}{N_{\mathrm{success}}}
\end{equation}

where $A$ is the total number of attempts, including failures.
This is the step that makes retry energy visible. A workflow that
fails once before succeeding charges both attempts to the goal; a
perfectly reliable workflow charges only one. The denominator
$N_{\mathrm{success}}$ ties the metric to task completion rather than
task invocation, aligning measurement with system utility.
The aggregation is a \textsc{calculated} closed-form sum over \textsc{measured} per-attempt energy values.

\subsection{Signal Provenance and Measurement Tiers}
\label{sec:provenance}

Each layer is assigned a provenance tier drawn from the
\ALEMS{} methodology registry: \textsc{measured} for quantities
read directly from hardware counters, \textsc{calculated} for
quantities derived from measured inputs through closed-form or
widely-validated transforms, and \textsc{inferred} for quantities
derived through alignment of independently measured signals where
the correspondence is approximate. All five layers in this paper
are either \textsc{measured} or \textsc{calculated}; the tier
assignments are stored machine-readably in the methodology registry
alongside each formula and are queryable without source code access.

\subsection{Composition Across Layers}
\label{sec:layer-composition}

The five layers are not alternatives; they execute in sequence for
every measured run, with each layer's output feeding directly into
the next. Table~\ref{tab:trace} traces this computation for a single
canonical run, showing how \traceEpGJ\,J emerges from a raw hardware
read of \tracePkgJ\,J through four successive transformations.
Table~\ref{tab:layerfailures} documents the failure mode at each
layer and its expected effect on reported metrics.

\begin{table}[t]
\scriptsize
\caption{Energy flow through all five layers for a single canonical
  agentic run (run~\traceRunAg, exp~\traceExpId, task: \texttt{gsm8k\_basic},
  provider: \texttt{llama\_cpp}, model: \texttt{tinyllama-1b-gguf}).
  Attempt~1 failed under the failure-injection protocol, consuming
  \traceAttemptOneJ\,J before attempt~2 succeeded.
  Task simplicity is deliberate: it isolates orchestration overhead
  from task complexity. v1 column shows time-fraction counterfactual.
  All values from \texttt{experiments.db} macros.}
\label{tab:trace}
\renewcommand{\arraystretch}{1.15}
\begin{tabularx}{\columnwidth}{llXrr}
\toprule
\textbf{Layer} & \textbf{Tier} & \textbf{Operation} &
\textbf{v1 (J)} & \textbf{v2 (J)} \\
\midrule
\rowcolor{blue!12}
L0 & \textsc{meas.} & Raw RAPL $E_{\mathrm{pkg}}$ & --- & \tracePkgJ \\
\rowcolor{blue!6}
L1 & \textsc{calc.} & $-$ idle baseline (\traceBaselineJ\,J) $\rightarrow E_{\mathrm{dyn}}$ & --- & \traceDynJ \\
\rowcolor{cyan!10}
L2 & \textsc{calc.} & $\times\,\fcpu \rightarrow E_{\mathrm{attr}}$ & --- & \traceAttrJ \\
\midrule
\rowcolor{teal!8}
L3 & \textsc{calc.} & Planning (\tracePlanDurS\,s) & \tracePlanVone & \tracePlanJ \\
\rowcolor{teal!4}
   & & Execution (\traceExecDurS\,s) & \traceExecVone & \traceExecJ \\
\rowcolor{teal!8}
   & & Synthesis (\traceSynDurS\,s, \traceSynWatts\,W) & \traceSynVone & \traceSynJ \\
\rowcolor{teal!4}
   & & Gap (\traceGapDurS\,s): retry \traceRetryJ\,J + coordination \traceOverheadJ\,J & \traceGapVone & \traceGapJ \\
\midrule
\rowcolor{green!8}
L4 & \textsc{calc.} & Attempt 1 failed (\traceAttemptOneJ\,J wasted) & & \\
\rowcolor{green!4}
   & & Attempt 2 success (\traceAttemptTwoJ\,J) & & \\
\rowcolor{green!8}
   & & \textbf{\EpG{} = total / 1 goal} & & \textbf{\traceEpGJ} \\
\midrule
\rowcolor{gray!6}
\multicolumn{3}{l}{Linear baseline run~\traceRunLin} & & \traceLinEpGJ \\
\rowcolor{gray!10}
\multicolumn{3}{l}{\textbf{OOI (this goal instance)}} & & \textbf{\traceOOI$\times$} \\
\bottomrule
\end{tabularx}
\end{table}

\begin{table}[t]
\small
\caption{Representative failure modes across the five-layer observation
  stack and their effect on reported energy metrics.}
\label{tab:layerfailures}
\renewcommand{\arraystretch}{1.15}
\begin{tabularx}{\columnwidth}{lXX}
\toprule
\textbf{Layer} & \textbf{Failure mode} & \textbf{Effect on metrics} \\
\midrule
\rowcolor{blue!12}
L0 Hardware    & Counter wraparound, missed reads & Silent undercount of $E_{\mathrm{pkg}}$ \\
\rowcolor{blue!6}
L1 Baseline    & Idle drift, background daemons active during baseline window & Signed bias in $E_{\mathrm{dyn}}$ \\
\rowcolor{cyan!10}
L2 Attribution & CPU ticks miss DRAM- or I/O-bound energy & Systematic undercount of $E_{\mathrm{attr}}$ \\
\rowcolor{teal!8}
L3 Phase       & Timestamp misalignment, sampling gap $>$25\% & Phase energy distortion; gap term inflated \\
\rowcolor{green!8}
L4 Aggregation & Inconsistent success criteria across runs & Non-comparable \EpG{} and \OOI{} values \\
\bottomrule
\end{tabularx}
\end{table}

\section{Reproducibility Protocol}
\label{sec:repro}

Energy measurements are sensitive to hardware state, firmware
revision, kernel behavior, scheduler policy, thermal history,
and software drift~\cite{mytkowicz2009producing,
thamm2025raplworkflows}. A measurement that cannot be traced
to its exact execution context cannot be reproduced and
cannot be falsified. \ALEMS{} addresses this through a
three-hash provenance protocol that binds every run to an
immutable record of its hardware identity, software
environment, and runtime measurement state(Figure~\ref{fig:repro}).

\begin{figure}[t]
\centering
\begin{tikzpicture}[
  box/.style={draw, rounded corners=3pt, minimum width=2.35cm,
              minimum height=0.88cm, font=\scriptsize, align=center,
              inner sep=4pt},
  hash/.style={draw, circle, minimum size=0.70cm,
               font=\scriptsize\bfseries, fill=blue!15, inner sep=0pt},
  arr/.style={-{Stealth[length=4pt]}, thick},
  node distance=0.42cm and 0.32cm,
]
  \node[box, fill=blue!10]   (hw)
    {\texttt{hardware\_config}\\cpu, microcode,\\kernel, RAPL\\\texttt{\hwHashShort}};
  \node[box, fill=green!10, right=of hw]  (env)
    {\texttt{environment\_config}\\python, OS,\\git commit};
  \node[box, fill=orange!10, right=of env] (ms)
    {Measurement state\\governor, turbo,\\baseline\_id};

  \node[hash, below=0.55cm of hw]  (Hhw)  {$\Hhard$};
  \node[hash, below=0.55cm of env] (Henv) {$\Henv$};
  \node[hash, below=0.55cm of ms]  (Hrun) {$\Hrun$};

  \draw[arr] (hw)  -- (Hhw);
  \draw[arr] (env) -- (Henv);
  \draw[arr] (ms)  -- (Hrun);
  \draw[arr, dashed, gray!60] (Hhw) to[bend right=20] (Hrun);

  \node[box, fill=gray!8, minimum width=7.0cm,
        below=0.52cm of Henv] (runs)
    {\texttt{runs} table: all three hashes stored per measurement\\
     + \texttt{git\_dirty} flag exposed (not hidden)};

  \draw[arr] (Hhw)  -- (runs.north west);
  \draw[arr] (Henv) -- (runs.north);
  \draw[arr] (Hrun) -- (runs.north east);
\end{tikzpicture}
\caption{Three-hash reproducibility protocol. $\Hhard$ encodes hardware
  fingerprint; $\Henv$ encodes software environment including git dirty
  flag; $\Hrun$ encodes measurement state and includes $\Hhard$.
  All three stored per run in the \texttt{runs} table.}
\label{fig:repro}
\end{figure}

\begin{align}
  \Hhard &= \mathrm{SHA256}(M_{\mathrm{cpu}} \| V_{\mu} \|
            K \| D_{\mathrm{RAPL}}) \\
  \Henv  &= \mathrm{SHA256}(P \| O \| G_{\mathrm{commit}} \|
            G_{\mathrm{dirty}} \| F_{\mathrm{ver}} \| 
            S_{\mathrm{schema}}) \\
  \Hrun  &= \mathrm{SHA256}(G_{\mathrm{gov}} \| T_{\mathrm{turbo}}
            \| \Hhard \| \Henv \| B_{\mathrm{id}})
\end{align}
where $M_{\mathrm{cpu}}$ is the CPU model string, $V_{\mu}$ the
microcode version, $K$ the kernel release, $D_{\mathrm{RAPL}}$
the available RAPL domains, $P$ the Python version, $O$ the OS
name, $G_{\mathrm{commit}}$ the git commit hash,
$G_{\mathrm{dirty}}$ the repository dirty flag, $F_{\mathrm{ver}}$
the framework version, $S_{\mathrm{schema}}$ the database schema
version, $G_{\mathrm{gov}}$ the CPU frequency governor,
$T_{\mathrm{turbo}}$ the turbo state, and $B_{\mathrm{id}}$ the
baseline measurement identifier; queryable fields and
canonical values are in Table~\ref{tab:repro}.

The three hashes compose into a diagnostic ladder rather than
a binary pass/fail gate. $\Hhard$ fingerprints the physical
substrate: CPU model, microcode revision, kernel version, and
available RAPL domains. $\Henv$ fingerprints the software stack,
including git commit, dirty-repository flag, and database schema
version --- ensuring attribution logic changes across migrations
are captured in the environment fingerprint. $\Hrun$ incorporates both
$\Hhard$ and $\Henv$ transitively, adding governor state, turbo
setting, and baseline identifier. A mismatch on $\Hrun$ therefore
has a precise interpretation: if $\Hhard$ matches and $\Henv$
also matches, the hardware and software environment are identical
and any $\Hrun$ divergence is confined to runtime state fields
(\texttt{governor}, \texttt{turbo}, \texttt{baseline\_id}),
inspectable directly from the \texttt{runs} table. If $\Henv$
differs, the software environment changed --- git commit,
runtime versions, or database schema version --- and the
relevant fields are queryable from \texttt{environment\_config}.
If $\Hhard$ also differs, the physical substrate changed and the individually stored fields
(\texttt{cpu\_model}, \texttt{rapl\_domains}) identify exactly
what changed and by how much, serving as a structured
compatibility checklist rather than a rejection signal.
All \nCanonicalAgentic{} agentic and \nCanonicalLinear{} linear
canonical runs share $\Hhard = \texttt{\hwHash}$, produced from
a clean repository state ($G_{\mathrm{dirty}}=\envGitDirty$,
commit \texttt{\envGitCommit}). Across \nDistinctEnvHash{}
distinct $\Henv$ values as git commits evolved between sessions,
OOI remains immune: each agentic--linear pair executes within
the same session under identical $\Henv$ and $\Hrun$, so
software drift divides out of the ratio by construction.
Table~\ref{tab:repro} formalizes the diagnostic ladder;
Table~\ref{tab:consistency} confirms all axes were controlled
across the full \nTotalRuns{} runs.

\begin{table}[h]
\scriptsize
\setlength{\tabcolsep}{4pt}
\renewcommand{\arraystretch}{1.2}
\caption{Reproducibility diagnostic semantics. Canonical values are
  from the \ALEMS{} dataset ($\Hhard=\texttt{\hwHash}$,
  commit \texttt{\envGitCommit}, $G_{\mathrm{dirty}}=\envGitDirty$).}
\label{tab:repro}
\begin{tabularx}{\linewidth}{lXXXl}
\toprule
\textbf{Hash} & \textbf{Mismatch Indicates} &
\textbf{Queryable Fields} & \textbf{Paired Immunity} &
\textbf{Canonical Value} \\
\midrule
\rowcolor{hdrblue!40}
$\Hhard$ &
Hardware drift: host change, microcode update, RAPL shift &
$M_{\mathrm{cpu}}$: \texttt{cpu\_model}, $K$: \texttt{kernel\_ver}, $D_{\mathrm{RAPL}}$: \texttt{rapl\_domains} &
None: pairs must share $\Hhard$ &
\texttt{\hwHashShort}  \\
\rowcolor{hdrgreen!40}
$\Henv$ &
Software drift: Python env, schema change, uncommitted code &
$P$: \texttt{python\_ver}, $G_{\mathrm{commit}}$: \texttt{git\_commit}, $G_{\mathrm{dirty}}$: \texttt{git\_dirty}, $S_{\mathrm{schema}}$: \texttt{schema\_version} &
Cancels in \OOI{}: both workflows share same session &
 \texttt{\envHashShort} \\
\rowcolor{hdrorange!40}
$\Hrun$ &
Runtime state shift: frequency scaling, turbo, baseline drift &
$G_{\mathrm{gov}}$: \texttt{governor}, $T_{\mathrm{turbo}}$: \texttt{turbo}, $B_{\mathrm{id}}$: \texttt{baseline\_id} &
Cancels in \OOI{}: both workflows share same runtime state &
\texttt{\traceRunHashShort} \\
\bottomrule
\end{tabularx}
\end{table}

\section{Energy per Successful Goal (\EpG{})}
\label{sec:epg}

The preceding sections establish the measurement apparatus. 
What remains is the unit itself.
Let $E_{\mathrm{attempt},i}$ denote the energy consumed during
attempt $i$ of a workflow, as obtained from the L2 process-level
projection (Section~\ref{sec:alems}). A workflow may span one or
more attempts, each drawing real energy from the hardware regardless
of whether it reaches a successful outcome. The total energy
attributable to workflow $j$ is the sum across all its attempts,
successful or otherwise:
\begin{equation}
  \Eworkflow^{(j)} = \sum_{i=1}^{n_j} E_{\mathrm{attempt},i}^{(j)}
\end{equation}
where $n_j$ is the number of attempts in workflow $j$. Let
$\mathcal{W}$ denote the set of all workflow units observed during
an evaluation, and $\mathcal{W}^{+} \subseteq \mathcal{W}$ the
subset of workflows that delivered a successful
outcome~\cite{bertsch2025energyutilization,afzal2023spechpc,kulkarni2026ceec}.
Energy per Successful Goal is then:
\begin{equation}
  \boxed{
  \EpG =
  \frac{\sum_{j \in \mathcal{W}} \Eworkflow^{(j)}}
       {|\mathcal{W}^{+}|}
  }
  \label{eq:epg}
\end{equation}
with unit \textbf{joules per successful goal (J/goal)}. This paper
instantiates $E_{\mathrm{attempt}}$ as local CPU package energy
measured via \RAPL{}; the temporal boundary model in
Section~\ref{sec:boundary} defines the attribution window with
precision. The definition is substrate-agnostic: as measurement
scope expands to GPU, network interface, and remote provider energy,
the formula absorbs those terms without structural change.
Figure~\ref{fig:ontology} grounds this in a concrete agentic
execution: two attempts summing to \traceAttrJ\,J for one successful
goal yield \EpG{}$=$\traceEpGJ\,J/goal, against a linear baseline of
\traceLinEpGJ\,J for the same task, an \OOI{} of \traceOOI$\times$
on a single run. The same metric that produces this amplification
for reasoning tasks yields \OOI{}$<1.0\times$ for tool-dispatch
tasks, where agentic execution replaces costly token generation with
a direct function call. That directional sensitivity is a property
of the definition, not a tuning parameter, and it is validated
systematically in Section~\ref{sec:validation}. The denominator
behavior under representative workflow structures is shown in
Figure~\ref{fig:epgdenom}.

 
\begin{figure}[t]
\centering
\begin{tikzpicture}[
  >=Stealth,
  font=\scriptsize,
  bar/.style={rounded corners=2pt, line width=0.4pt},
  ok/.style={bar, fill=teal!22, draw=teal!55},
  fail/.style={bar, fill=red!12, draw=red!40},
  agg/.style={bar, fill=blue!14, draw=blue!45},
  lbl/.style={font=\tiny, align=center},
  hdr/.style={font=\scriptsize\bfseries, align=center},
  dim/.style={gray!50, line width=0.3pt, dashed},
  node distance=0.0cm,
]
 
 
\node[hdr] at (1.1,  5.2) {(a) Linear};
\node[hdr] at (1.1,  4.9) {single attempt};
\node[hdr] at (4.0,  5.2) {(b) Agentic};
\node[hdr] at (4.0,  4.9) {with retry};
\node[hdr] at (6.85, 5.2) {(c) Aggregate};
\node[hdr] at (6.85, 4.9) {$n=\nCanonicalAgentic{}$ goals};
 
\fill[ok] (0.4, 0) rectangle (1.8, 1.8);
\node[lbl, text=teal!70!black] at (1.1, 1.1)
  {\traceLinEpGJ\,J\\\checkmark\;success};
 
\draw[dim] (0.2, -0.15) -- (2.0, -0.15);
\node[lbl] at (1.1, -0.45)
  {$\EpG = \frac{\traceLinEpGJ\,\mathrm{J}}{1\;\mathrm{goal}}$};
\node[font=\tiny\bfseries, text=teal!60!black] at (1.1, -0.85)
  {$=\traceLinEpGJ$\,J/goal};
 
 
\fill[fail] (2.8, 0) rectangle (4.0, 4.0);
\node[lbl, text=red!60!black] at (3.4, 2.2)
  {\traceAttemptOneJ\,J\\$\times$\;failed};
 
\fill[ok]   (4.1, 0) rectangle (5.2, 2.4);
\node[lbl, text=teal!70!black] at (4.65, 1.3)
  {\traceAttemptTwoJ\,J\\\checkmark\;success};
 
\draw[dim] (2.6, -0.15) -- (5.4, -0.15);
\node[lbl] at (4.0, -0.45)
  {$\EpG = \frac{\traceAttemptOneJ+\traceAttemptTwoJ\,\mathrm{J}}
                {1\;\mathrm{goal}}$};
\node[font=\tiny\bfseries, text=red!55!black] at (4.0, -0.95)
  {$=\traceEpGJ$\,J/goal\quad OOI\,$=\traceOOI\times$};
 
\node[font=\tiny, gray!55, align=center] at (4.65, 3.5)
  {failed\\$>$\,success};
 
 
\fill[ok]  (5.7, 0) rectangle (6.7, 1.0);
\node[lbl, text=teal!70!black] at (6.2, 0.55)
  {\meanEpGLinClean\,J};
 
\fill[agg] (6.9, 0) rectangle (8.0, 4.3);
\node[lbl, text=blue!60!black] at (7.45, 2.4)
  {\meanEpGAgClean\,J};
 
\node[font=\tiny, align=center] at (6.2, -0.25) {Linear};
\node[font=\tiny, align=center] at (7.45, -0.25) {Agentic};
 
\draw[dim] (5.5, -0.45) -- (8.2, -0.45);
\node[font=\tiny\bfseries, text=blue!55!black] at (6.85, -0.80)
  {Mean OOI\,$=\headlineOOIClean\times$\;($n=\nCanonicalAgentic{}$)};
 
\draw[gray!30, line width=0.5pt]
  (2.4, -1.1) -- (2.4, 4.5);
\draw[gray!30, line width=0.5pt]
  (5.4, -1.1) -- (5.4, 4.5);
 
\node[rotate=90, font=\tiny, gray!55] at (-0.1, 2.0)
  {Energy (J), proportional bars};
 
\end{tikzpicture}
\caption{\EpG{} denominator behavior with real measured energy values.
  (a)~Linear baseline: single successful attempt at \traceLinEpGJ\,J/goal.
  (b)~Agentic failure-injected run (exp.~\traceExpId, run~\traceRunAg):
  the failed attempt (\traceAttemptOneJ\,J) and successful attempt
  (\traceAttemptTwoJ\,J) both enter the numerator; one goal enters the
  denominator, yielding \EpG{}$=$\traceEpGJ\,J/goal and
  \OOI{}$=\traceOOI\times$. Inference-level accounting assigns identical
  cost to both attempts.
  (c)~Aggregate across \nCanonicalAgentic{} canonical goals:
  mean \EpG{} of \meanEpGAgClean\,J (agentic)
  vs.\ \meanEpGLinClean\,J (linear), \OOI{}$=\headlineOOIClean\times$.
  Each panel uses an independent energy scale to preserve readability
  across the three-order-of-magnitude range.}
\label{fig:epgdenom}
\end{figure}

The structure of the formula encodes a deliberate metrological
choice that is worth making explicit. Failed attempts enter the
numerator because they consumed real energy on behalf of the goal:
the hardware ran, the model executed, and joules were drawn from the
wall, regardless of the outcome. Those attempts do not increment the
denominator because no goal was delivered to the user. A system that requires four attempts before
succeeding reports an \EpG{} four times higher than one that succeeds
immediately, even under identical per-attempt energy cost. This property makes \EpG{} a behavioral signal as much as an
energy signal: two systems with identical per-attempt energy but
different reliability profiles will separate cleanly under \EpG{},
enabling researchers to study how model behavior --- prompt
sensitivity, retry depth, failure recovery strategy --- translates
directly into energy cost at the goal level. \EpG{} exposes the interaction between retry behavior and energy consumption.

\subsection{Goal-Level Accounting}

Energy-per-inference normalizes total system energy by
$N_{\mathrm{inferences}}$, a quantity the system designer controls. 
By decomposing work into more numerous, cheaper calls, a system can report lower energy-per-inference while
total goal energy rises. The metric thereby becomes gameable by design~\cite{bertsch2025energyutilization,kulkarni2026ceec}, and
empirically it anti-correlates with end-to-end task
cost~\cite{oviedo2025energyinference}. \EpG{} substitutes the goal
count, which is fixed by the evaluation protocol rather than by
system design. A benchmark defines how many goals are attempted; the
system has no lever to adjust that denominator by changing its
internal execution strategy. The same goal-level accountability
applies symmetrically across architectures: a single-shot linear
system and a multi-step agentic system solving identical tasks are
compared under the same unit, with no structural advantage conferred
to either by the metric itself.

\subsection{What Goal-Level Accounting Enables}

Consider two systems benchmarked on the same task family. System~A
reports 50\,J per inference; System~B reports 70\,J per inference.
Under inference-level accounting, System~A is the efficient choice.
At the goal level the picture inverts: System~A requires on average
four attempts before delivering a correct answer, accumulating
200\,J per successful goal, while System~B succeeds in one attempt
at 70\,J per goal. The more reliable system consumes less energy
per unit of useful work, a fact that inference-level benchmarks
not only fail to surface but actively obscure by rewarding low
per-call cost regardless of how many calls are needed. \EpG{}
makes this inversion visible and reproducible. It enables direct
comparison of systems that differ in reliability, architecture, and
orchestration depth under a single unit that reflects what was
actually delivered to the user. The empirical analogue of this
inversion, measured across \nCanonicalAgentic{} agentic and
\nCanonicalLinear{} linear goals on real hardware, is the central
finding of Section~\ref{sec:validation}.

\subsection{Metric Properties and Boundary Conditions}

The definition keeps success binary by design. A goal either
completes within the task definition's acceptance criteria or it
does not. Binary success provides a clean metrological baseline:
the denominator is unambiguous, reproducible across evaluators, and
not subject to partial-credit disagreements that would make
cross-paper comparison intractable. For the
empirical claims in this paper, binary success is both sufficient
and appropriate: the tasks used are well-specified arithmetic,
logical, and factual problems with deterministic correct answers.

One boundary condition requires handling. When
$|\mathcal{W}^{+}|=0$, the denominator is zero and \EpG{} is
undefined. This arises when a system fails every goal in the
evaluation set, a condition that is itself a meaningful experimental
finding. In such cases we report cumulative system energy alongside
the observed success rate of 0\%, rather than suppressing the result.
 Latency and monetary cost are orthogonal performance dimensions and are reported alongside
\EpG{} rather than collapsed into a composite figure, which would
sacrifice the dimensional clarity that makes the unit scientifically
useful. All reported \EpG{} values are conditioned on the measurement
boundaries declared in Section~\ref{sec:boundary} and must be
disclosed alongside any reported figure to enable meaningful
cross-study comparison. A system measured with GPU energy included
is not directly comparable to one measured at CPU package only;
the boundary is part of the measurement, not a footnote to it.

The empirical instantiation of this definition across
\nCanonicalAgentic{} agentic and \nCanonicalLinear{} linear goals
yields mean \EpG{} of \meanEpGAgClean\,J and \meanEpGLinClean\,J
respectively, a ratio of \headlineOOIClean$\times$ that forms the
headline result of Section~\ref{sec:validation}. The estimator is
consistent and failure-monotone under a truncated geometric retry
model; the formal proofs, including closed-form expectations and
convergence analysis, are in Appendix~\ref{appendix:stochastic}.

\section{Orchestration Overhead Index (\texorpdfstring{\OOI{}}{OOI})}
\label{sec:ooi}

\EpG{} measures the absolute energy cost of completing a goal.
It does not answer the question a system designer must ask before
deployment: is agentic execution structurally cheaper or more
expensive than linear execution for this workload? Two systems
with identical \EpG{} may differ by an order of magnitude in
orchestration overhead if one succeeds reliably on the first
attempt and the other retries repeatedly. Without a comparative
unit, that overhead is invisible: every existing benchmark reports
per-inference energy or accuracy, leaving the goal-level
orchestration cost unmeasured, unreported, and unoptimized across
the field. \OOI{} closes this gap as a dimensionless complement
to \EpG{}, expressing orchestration overhead as a ratio that
holds across hardware, models, and inference substrates.

Formally, let $\mathcal{G}$ denote the set of evaluated goals and
$\mathcal{W} = \{\textsc{agentic}, \textsc{linear}\}$ the set of
workflow types. \OOI{} is a function
$\OOI{}: \mathcal{G} \times \mathcal{W}^2 \rightarrow \mathbb{R}^+$
defined over matched pairs $(g, w_a, w_l)$ where both workflow
types attempt the same goal $g$ under identical task specification
and success criterion~\cite{bertsch2025energyutilization,afzal2023spechpc}:

\begin{equation}
  \OOI{} = \frac{\EpG_{\mathrm{agentic}}}{\EpG_{\mathrm{linear}}}
  \label{eq:ooi}
\end{equation}

The range partitions into three metrologically distinct regimes.
$\OOI{} \in (0, 1)$ identifies an orchestration dividend: agentic
execution is strictly cheaper than linear per successful goal,
arising when O(1) tool dispatch replaces token generation as the
dominant cost path --- agentic is the energy-efficient choice.
$\OOI{} = 1$ is the parity point: both workflows consume equal
energy per goal. $\OOI{} \in (1, \infty)$ quantifies the
orchestration tax: linear execution is the energy-efficient
choice, with agentic overhead attributable to planning loops,
inter-step coordination, and retry amplification. Two
degenerate cases require explicit handling: $\OOI{} \rightarrow
\infty$ when the linear baseline succeeds but agentic never does,
and $\OOI{} \rightarrow 0$ in the converse; when both workflows
fail, \EpG{} is undefined for both and \OOI{} is not computed ---
cumulative energy and success rate are reported separately per the
boundary conditions of Section~\ref{sec:epg}.

The linear baseline is the metrological anchor: a single prompt
mapped to a single inference call, without tool use, branching, or
retries. It is the minimum-energy path to goal completion under
zero orchestration overhead. Token volume is an experimental
observation, not a control variable --- the goal is defined by
task specification and success criterion alone. For workloads
served by remote inference endpoints, the matched-pair design
controls for server-side energy, making client-side \OOI{} a lower
bound on total orchestration overhead; full-system scope is
discussed in Section~\ref{sec:conclusion}. Because \EpG{} is
consistent under the retry model of Section~\ref{sec:epg},
\OOI{} inherits consistency as a ratio of two convergent
estimators~\cite{efron1979bootstrap}, stabilizing within the canonical goal set as shown in
Figure~\ref{fig:estimator-conv}.

\OOI{} enables three classes of decision that \EpG{} alone cannot
support. First, \emph{task routing}: workload families with
$\OOI{}<1$ are candidates for agentic execution by default, while
families with high \OOI{} require explicit cost justification
against accuracy or latency gains. Second, \emph{model selection}:
upgrading to a larger model may reduce \EpG{} through fewer
retries while increasing per-inference cost --- \OOI{} reveals
whether the net goal-level effect is positive or negative. Third,
\emph{fleet energy budgeting}: a mixed workload can be
characterized by a portfolio \OOI{} as the
energy-weighted mean across task families, enabling operators to
set and enforce goal-level energy SLAs independent of
infrastructure substrate. The empirical \OOI{} distribution across
all \nCanonicalAgentic{} canonical goals is reported in
Section~\ref{sec:validation} and Figure~\ref{fig:taskepg}.

\section{Experimental Validation}
\label{sec:validation}
 
This section validates the \ALEMS{} measurement stack and derived
metrics (\EpG{}, \OOI{}) through a structured causal elimination
argument: each claim rules out one alternative explanation for the
observed energy gap until orchestration structure alone remains.
 
\subsection{Evaluation Methodology}
\label{sec:apparatus}
 
\textbf{Platform.}
\ALEMS{} samples Intel \RAPL{} energy counters~\cite{david2010rapl,intel_rapl}
at 100\,Hz, applies $2\sigma$ idle-window baseline
subtraction~\cite{thamm2025raplworkflows} to isolate
workload-induced energy from background system draw, and tracks
execution within the attribution window $[\tzero,\tone]$
(Section~\ref{sec:boundary}), which excludes pre-task
instrumentation and post-task framework activity that prior
tools conflate with workload energy~\cite{codecarbon,mlperf}.
Every run is bound to its hardware and software context via
the three-hash protocol $\Hhard$/$\Henv$/$\Hrun$
(Section~\ref{sec:repro}). Full run configuration is given
in Table~\ref{tab:consistency}.

\textbf{Two measurement regimes.}
Two inference substrates are evaluated under the same measurement
stack. \emph{Local inference} uses Ollama/TinyLlama-1B
($n=\nLocalRuns{}$ agentic runs across all local experiments), where \RAPL{} captures full
package energy including all LLM computation. \emph{Remote inference}
uses the Groq API with llama-3.3-70b-versatile
($n=\nApiRuns{}$ agentic runs across all remote experiments), where \RAPL{} captures only
client-side orchestration energy; server-side computation is off-host and not directly measured.
LLM API providers do not expose per-request energy signals;
the only published estimation method uses TDP scaled by PUE
and wall-clock time~\cite{patterson2021carbon,google_carbon_methodology},
a linear proxy that does not capture per-request or per-phase
variation and is therefore unsuitable for agentic workflow energy
attribution. Because both agentic and linear configurations invoke
the same model with comparable task specifications, server-side
energy approximately cancels in the \OOI{} ratio, making
client-side \OOI{} a lower bound on total orchestration overhead.
 
\textbf{Task taxonomy.}
Tasks span four structural tiers chosen to exercise distinct
energy regimes (Table~\ref{tab:tasks}): factual retrieval
(factual QA, science QA), mathematical reasoning (GSM8K basic,
GSM8K multi-step), logical reasoning, and tool-augmented
execution (single-tool and sequential chain). This taxonomy
follows the established classification of agentic workflow
complexity~\cite{oviedo2025energyinference,chien2023reducingcarbon},
ensuring \OOI{} is evaluated across different
execution structures rather than optimised for any particular
outcome. In tool-augmented tasks, the linear workflow has no
tool access and must resolve computation via LLM token
generation; whether \OOI{}$<1$ or \OOI{}$>1$ therefore tests
whether the metric responds correctly to workflow structure.
 
\begin{table*}[t]
\scriptsize
\setlength{\tabcolsep}{4pt}
\caption{Task taxonomy and canonical results. Evaluation functions
  are pre-specified and fixed before execution. OOI values from
  canonical paired set (llama\_cpp, failure\_injection,
  \texttt{session\_20260508\_190013}). Tool tasks use
  failure\_injection runs across all sessions.
  $\dagger$=OOI${<}1$: agentic more efficient than linear.}
\label{tab:tasks}
\renewcommand{\arraystretch}{1.1}
\begin{tabularx}{\textwidth}{llllXXc}
\toprule
\textbf{ID} & \textbf{Tier} & \textbf{Tools} &
\textbf{Prompt Type} & \textbf{Evaluation Function} &
\textbf{Success Criterion} & \textbf{OOI} \\
\midrule
\rowcolor{hdrblue}
\multicolumn{7}{@{}l}{\textbf{Reasoning Tasks}} \\
\rowcolor{hdrblue!25} FQA     & Factual retrieval & none       & Factual question      & Exact string match & Matches reference answer & $\ooiFactualQA\times$ \\
SciQA   & Factual retrieval  & none       & Science question      & Normalized string match against accepted set & Any reference matched & $\ooiScienceQA\times$ \\
\rowcolor{hdrblue!25} LR      & Logical reasoning  & none       & Logical syllogism     & Exact label match & Correct label selected & $\ooiLogical\times$ \\
GSM8K-B & Math reasoning     & none       & Arithmetic problem    & Integer exact match~\cite{cobbe2021gsm8k} & Ground-truth integer & $\ooiGsmBasic\times$ \\
\rowcolor{hdrblue!25} GSM8K-M & Math reasoning     & none       & Multi-step arithmetic & Integer exact match~\cite{cobbe2021gsm8k} & Ground-truth integer & $\ooiGsmMulti\times$ \\
\rowcolor{hdrorange}
\multicolumn{7}{@{}l}{\textbf{Tool-Augmented Tasks}} \\
TG:Calc & Single tool        & Calculator & Fixed expression      & Deterministic validator & Exact numeric result & $\ooiTgCalc\times$$^\dagger$ \\
\rowcolor{hdrorange!25} TG:DB   & Single tool        & Database  & Fixed SQL query       & Deterministic validator & Exact query result & $\ooiTgDb\times$$^\dagger$ \\
TG:Seq2 & Tool chain         & DB+File    & Query then write      & Deterministic validator & State matches spec & $\ooiTgSeq\times$ \\
\bottomrule
\end{tabularx}
\end{table*}
 
\textbf{Experiment design.}
Experiments are designed to isolate three distinct aspects of
the thesis, each addressed by a dedicated study type.
\emph{(i) Structural overhead study:} each goal is executed
once as agentic and once as linear within the same session on
the same goal instance, controlling for thermal state and DVFS
drift via matched $\Hhard$/$\Henv$ hashes; energy differences
within a pair are attributable to workflow structure alone.
This yields $\nCanonicalAgentic{}$ matched agentic and
$\nCanonicalLinear{}$ linear goals across five reasoning task
families and $n=50$ goal pairs per tool task family
(Table~\ref{tab:tasks}).
\emph{(ii) Failure injection study:} to measure retry-driven
energy amplification, controlled failures are injected at a
fixed rate~\cite{natella2016fault}, exercising the retry and
recovery paths that production agentic systems
traverse~\cite{buggen2025}; this produces \nTotalAttempts{}
total attempts, of which \nRetryAttempts{} retries following a failed first attempt(\retryRatePct\% retry rate).
\emph{(iii) Overhead study:} instrumentation overhead is
measured across $\nTotalRuns{}$ runs from $\nExperiments{}$
experiments spanning $\nTasks{}$ task families, confirming
that \ALEMS{} does not contaminate reported \EpG{} values.
Together, these three studies provide the evidence base for
claims C1--C5 in Sections~\ref{sec:c1}--\ref{sec:c5}.

\subsection{C1 --- Measurement Validity}
\label{sec:c1}

The \ALEMS{} measurement stack produces valid
energy measurements: the 100\,Hz sampling target is empirically
achieved, \RAPL{} counter deltas are uncorrupted, and sample
coverage is sufficient for phase-level attribution.

\textbf{Sampling rate.}
The 100\,Hz target (Table~\ref{tab:consistency}) holds in practice.
Across \nTotalSamples{} samples from \nLocalRuns{} local and
\nApiRuns{} remote runs, the mean inter-sample interval is
\avgSampleIntervalMs\,ms (\empiricalSampleRateHz\,Hz);
\pctNearTenMs\% of intervals fall within 5--15\,ms and the
maximum observed gap is \maxSampleGapMs\,ms, affecting
$<0.01\%$ of samples (Figure~\ref{fig:valid}(a)). The
distribution is identical between local and remote
substrates, ruling out inference-path effects on cadence.

\textbf{Counter reads.}
Total run energy is $E = \mathrm{RAPL}(\tone) -
\mathrm{RAPL}(\tzero)$ --- two hardware counter reads whose
accuracy is independent of sampling density
(Section~\ref{sec:boundary}). Across all \nTotalRuns{} runs,
every delta is monotonic and non-negative ($\lOneValidPct$\%
L1 validity): no wraparound, no missed reads. \EpG{} inherits
this exactness directly.

\textbf{Coverage.}
Samples serve phase attribution, not energy totals. Coverage
$C = T_{\mathrm{observed}} / T_{\mathrm{total}}$ measures
what fraction of $[\tzero,\tone]$ the sampler saw
(Section~\ref{sec:alems}); short linear runs naturally score
lower than long agentic runs, which motivates the gold
threshold $C\geq95\%$ (Figure~\ref{fig:valid}(b)). Of
\nTotalRuns{} runs, \nGoldRuns{} reach gold tier, \nAccRuns{}
are acceptable ($80\leq C<95\%$), and \nPoorRuns{} are
excluded. Across all five task families, mean coverage exceeds
90\% for both workflow types (Figure~\ref{fig:valid}(c)) ---
sufficient granularity to separate planning, execution, and synthesis phases.

\begin{figure}[t]
  \includegraphics[width=\columnwidth]{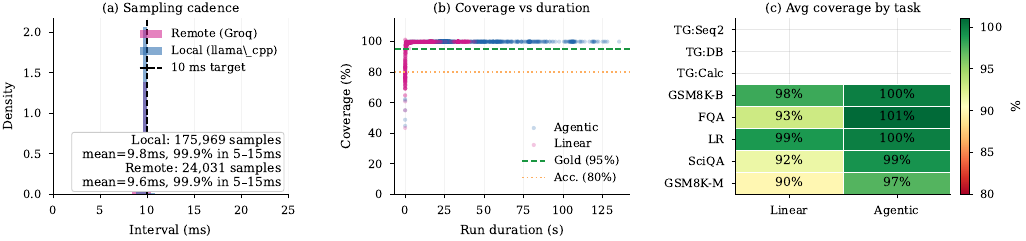}
  \caption{\textbf{[C1]} (a) Inter-sample interval distribution
    across \nTotalSamples{} samples from both inference regimes:
    \pctNearTenMs\% fall within 5--15\,ms, confirming the
    100\,Hz target. (b) Coverage vs.\ run duration: short linear
    runs motivate the gold threshold ($C\geq95\%$, dashed).
    (c) Mean coverage by task and workflow type: all five
    canonical families exceed 90\%, confirming phase attribution
    fidelity across the canonical dataset. }
  \label{fig:valid}
\end{figure}

\subsection{C2 --- Reproducibility}
\label{sec:c2}
 
Every \ALEMS{} measurement is fully contextualized:
hardware identity, software stack, and runtime state are recorded
per run and queryable post-hoc.

Energy measurements move with hardware platform, OS and kernel,
language runtime, CPU frequency governor, boost state, idle
baseline, and thermal history~\cite{mytkowicz2009producing}.
\ALEMS{} records or controls all seven (Table~\ref{tab:consistency}).

Across all \nTotalRuns{} runs, the CPU governor is
\texttt{powersave} and turbo \texttt{enabled}, verified against
the \texttt{baseline\_id} stored per run.
Mean idle package power across \nBaselines{} baseline measurements is
\avgIdlePkgW\,W; background CPU averages \avgBgCpuPct\% (range \minBgCpuPct--\maxBgCpuPct\%),
giving the $2\sigma$ subtraction a stable
floor~\cite{thamm2025raplworkflows}. Thermal history is absorbed
by the paired design: agentic and linear execute back-to-back on
the same goal instance, so within-pair energy differences reflect
workflow structure.

The three-hash protocol~(Section~\ref{sec:repro}) makes this
context permanently queryable. A replication attempt matches
$\Hhard = \texttt{\hwHash}$ and verifies
\texttt{governor=powersave}, \texttt{turbo=enabled} against the
stored baseline record. Table~\ref{tab:repro} identifies what
each hash mismatch indicates and which fields to inspect.
 
\begin{table*}[t]
\scriptsize
\setlength{\tabcolsep}{4pt}
\renewcommand{\arraystretch}{1.2}
\caption{Experimental configuration and context control across all
  \nTotalRuns{} runs. All axes that independently affect energy
  measurement are either fixed by platform configuration or recorded
  per run for post-hoc verification.}
\label{tab:consistency}
\begin{tabularx}{\textwidth}{llXXl}
\toprule
\textbf{Axis} & \textbf{Control} & \textbf{Value} & \textbf{Notes} & \textbf{Cov.} \\
\midrule
\rowcolor{hdrblue!30}
Hardware & $\Hhard$ per run & \texttt{\hwHashShort} & \hwCpuModel & 100\% \\
\rowcolor{hdrblue!15}
OS / kernel & $\Henv$ per run & kernel \envKernelVer & Ubuntu Linux & 100\% \\
\rowcolor{hdrblue!30}
Python & $\Henv$ per run & CPython \envPythonVer & git commit \texttt{\envGitCommit}, $G_{\mathrm{dirty}}=\envGitDirty$ & 100\% \\
\rowcolor{hdrblue!15}
CPU governor & \texttt{baseline\_id} & \texttt{powersave} & frequency scaling disabled & all runs \\
\rowcolor{hdrblue!30}
Turbo & \texttt{baseline\_id} & enabled & consistent across all sessions & all runs \\
\rowcolor{hdrgreen!20}
\RAPL{} rate & fixed & 100\,Hz & \pctNearTenMs\% samples within 1\,ms of target & all runs \\
\rowcolor{hdrgreen!35}
Idle baseline & $2\sigma$, \nBaselines{} windows & \avgIdlePkgW\,W mean & background CPU \avgBgCpuPct\% & per run \\
\rowcolor{hdrgreen!20}
Baseline protocol & $2\sigma$ filter & $10\times10$\,s windows & outliers excluded before averaging & per run \\
\rowcolor{hdrorange!20}
LLM (local) & fixed & TinyLlama-1B, Ollama & RAPL captures full inference energy & all runs \\
\rowcolor{hdrorange!35}
LLM (remote) & fixed & llama-3.3-70b, Groq & client-side orchestration only & all runs \\
\rowcolor{hdrorange!20}
Coverage threshold & enforced & $C\geq95\%$ & \pctNearTenMs\% runs pass gold threshold & all runs \\
\rowcolor{gray!10}
Thermal & Paired design & agentic+linear back-to-back & within-pair thermal cancels in \OOI & all pairs \\
\bottomrule
\end{tabularx}
\end{table*}

\subsection{C3 --- Boundary Model Validation}
\label{sec:c3}
 
Figure~\ref{fig:boundary} traces a representative paired run
(exp.\,629, GSM8K-B, local llama\_cpp inference) through the
four \RAPL{} anchors that define the \ALEMS{} measurement
boundary. At $t_{\mathrm{pre}}$, a counter snapshot establishes the
pre-task baseline window; $\tzero$ marks the first energy
sample and the start of attributed measurement; $\tone$
marks executor return and the end of attribution; $\ttwo$
closes the post-task diagnostic window after framework
teardown. The agentic run draws $\meanTaskJAgClean$\,J
between $\tzero$ and $\tone$; the linear run draws
$\meanTaskJLinClean$\,J over the same window.
Framework overhead --- the energy consumed outside
$[\tzero, \tone]$ --- averages $\overheadJAgClean$\,J for
agentic workflows and $\overheadJLinClean$\,J for linear
workflows across $\nCleanOverheadRuns$ normal comparison
runs, representing $\overheadPctAgClean$\% and
$\overheadPctLinClean$\% of \EpG{} respectively.
 
This fixed absolute overhead cost has a directional
consequence for tools that estimate energy as
$\mathrm{TDP} \times t_{\mathrm{wall}}$, where
$t_{\mathrm{wall}}$ includes post-task activity.
Because linear workflows complete faster than agentic
workflows for the same task, post-task overhead constitutes
a larger fraction of reported linear energy, compressing
the \OOI{} ratio toward $1.0\times$ independently of any
real difference in workload structure. The $[\tzero,\tone]$
boundary eliminates this systematic bias: \EpG{} is
attributed exclusively to the executor window, and
framework overhead is recorded separately as a platform
characterization metric rather than conflated with workload
energy~\cite{codecarbon,mlperf}.
 
The pre-task diagnostic interval contributes
$\preTaskUj$\,$\mu$J on average --- three orders of
magnitude below task energy --- and is excluded from \EpG{}
by the $\tzero$ anchor. Post-task energy, captured between
$\tone$ and $\ttwo$, averages $\postTaskUj$\,$\mu$J,
reflecting the near-idle power draw of framework teardown
after the executor returns. Neither window affects the
\OOI{} computation, which operates on attributed task
energy only.
 
\begin{figure}[t]
  \centering
  \includegraphics[width=\columnwidth]{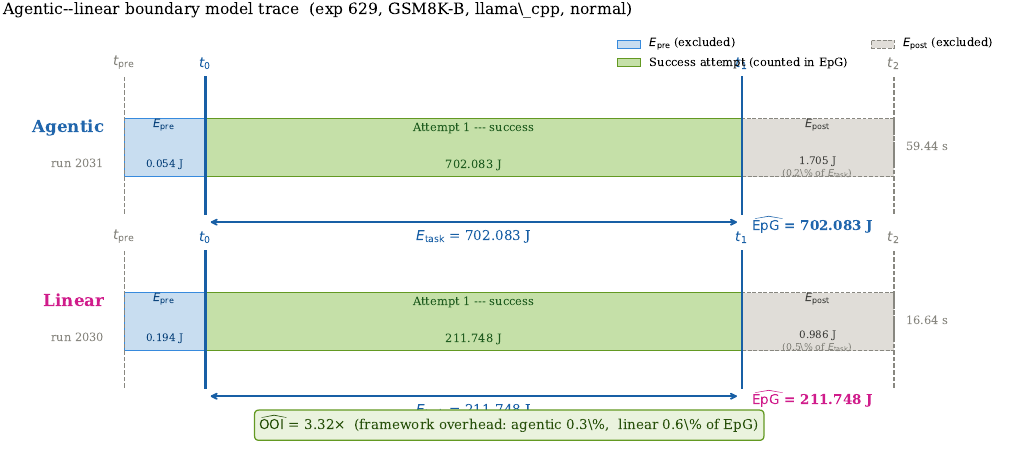}
  \caption{\textbf{[C3]} Measurement boundary trace for a
    representative paired run (exp.\,629, GSM8K-B,
    llama\_cpp, normal). Four \RAPL{} anchors partition
    execution into pre-task, attributed task $[\tzero,\tone]$,
    and post-task windows. Framework overhead is
    $\overheadPctAgClean$\% of agentic \EpG{} and
    $\overheadPctLinClean$\% of linear \EpG{} --- a fixed
    absolute cost that does not scale with task energy.
    Tools estimating energy as TDP$\times$wall-time
    conflate this overhead with workload energy, compressing
    \OOI{} toward $1.0\times$.}
  \label{fig:boundary}
\end{figure}
 
\subsection{C4 --- Discriminative Power}
\label{sec:c4}
 
Across the canonical paired set of $\nCanonicalAgentic{}$ agentic
and $\nCanonicalLinear{}$ linear goals, agentic workflows consume
a mean $\headlineOOIClean\times$ higher energy per goal
($\meanEpGAgClean$\,J vs $\meanEpGLinClean$\,J). The \OOI{}
ranges from $\ooiLocalMin\times$ to $\ooiLocalMax\times$ across
the five reasoning task families (Figure~\ref{fig:mainresult},
Figure~\ref{fig:taskepg}), with higher values for tasks requiring
more orchestration steps: GSM8K multi-step ($\ooiGsmMulti\times$)
involves more planning iterations than GSM8K basic
($\ooiGsmBasic\times$), and the overhead scales accordingly.
This correlation between task complexity and \OOI{} is itself
evidence that the metric captures orchestration depth, not a
fixed measurement bias. Contrary to our expectation, GSM8K-M showed higher
OOI than FQA --- multi-step arithmetic accumulated
more retry overhead than factual retrieval across
all canonical runs.
 
\begin{figure*}[t]
  \includegraphics[width=\textwidth]{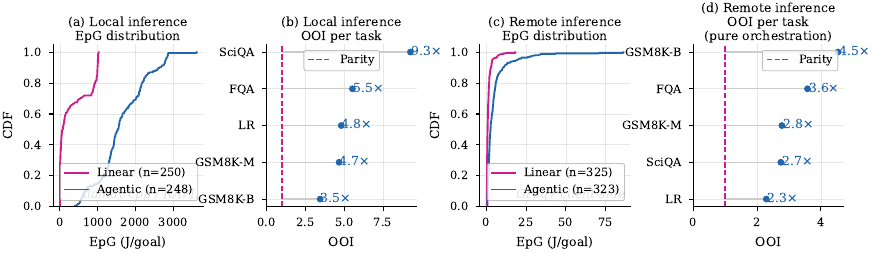}
  \caption{\textbf{[C4 Main Result.]}
    (a,b) Local inference (Ollama/TinyLlama): \EpG{} ECDF
    and per-task \OOI{} with bootstrap 95\% CIs(500 resamples).
    (c,d) Remote inference (Groq/llama-3.3-70b): client-side
    \EpG{} ECDF and per-task \OOI{}.
    \OOI{}$>1$ in both regimes confirms that orchestration
    overhead is structural and substrate-independent.
    }
  \label{fig:mainresult}
\end{figure*}
 
\textbf{Tool-task inversion.}
Figure~\ref{fig:taskepg} includes three tool-graph task families
alongside the reasoning tasks. For single-tool tasks
(\texttt{tg\_single\_calc}, \texttt{tg\_single\_db}),
\OOI{} $<1$: agentic workflows are \emph{more} energy-efficient
than their linear counterparts. The mechanism is direct:
the linear workflow has no tool access and must resolve the
computation entirely via LLM token generation - a
billion-parameter transformer reasoning through arithmetic
token by token. The agentic workflow dispatches a deterministic
local tool and resolves the same computation in $O(1)$.
Even with the orchestration loop overhead, $E_{\mathrm{total}}$
is lower because the tool replaces the costliest part of the
linear path. For the sequential two-tool chain,
\OOI{} $=\ooiTgSeq\times$ --- inter-step coordination overhead
partially offsets the tool efficiency gain. These results confirm
that \OOI{} is directionally correct across workflow structures
and not biased upward by construction. This finding also
motivates workflow-aware execution: for computation-amenable
tasks, agentic dispatch is the energy-optimal choice.
 
\begin{figure}[t]
  \includegraphics[width=\columnwidth]{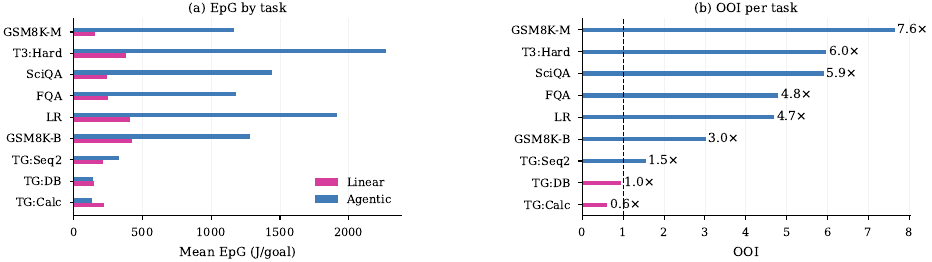}
  \caption{\textbf{[C4+C5]} Mean \EpG{} and \OOI{} per task
    family. Reasoning tasks (top) show \OOI{}$>1$ scaling with
    orchestration depth. Tool tasks (bottom) show \OOI{}$\leq1$
    when tool execution replaces costlier LLM token generation.
    \OOI{} correctly captures the energy structure of each
    workflow type.}
    Task abbreviations follow Table~\ref{tab:tasks}.
    T3:Hard ($n=19$) is shown for completeness but 
    excluded from the canonical paired set due to 
    insufficient sample size.
  \label{fig:taskepg}
\end{figure}
 
\subsection{C5 --- Orchestration Dominance}
\label{sec:c5}
 
The tool-task inversion confirms \OOI{} is not 
structurally biased upward. The question is what 
drives the overhead in reasoning tasks. On $n=\nPureOrchAg{}$ goals with zero retry waste, agentic workflows consume
$\pureOrchAgJ$\,J versus linear $\pureOrchLinJ$\,J
(\OOI{}$=\pureOrchOOI\times$, Figure~\ref{fig:combined_orch}(b)).
Retries are absent by construction: the entire gap originates in
structural control flow --- planning loops, multi-step execution,
and synthesis phases that linear workflows do not perform.
Orchestration structure is a sufficient cause of energy
overhead, independent of retry behavior. The \EpG{} gap between
agentic and linear execution is driven by orchestration structure,
not by increased computation.
 
\textbf{Retry amplification.}
When failures do occur, energy amplifies further.
Across \nTotalAttempts{} total attempts in the retry study,
\nRetryAttempts{} were retries; failed attempts consumed
\wasteEnergyPct\% of total agentic energy
(Figure~\ref{fig:waste}, Figure~\ref{fig:combined_orch}(a)).
Each retry contributes one full attempt worth of energy to the
numerator of \EpG{} while the denominator counts only successful
goals, producing linear amplification with retry count. Linear
workflows achieve $\successRateLinear$\% first-attempt success,
confirming that retry overhead is agentic-specific.
 
\textbf{Substrate independence.}
The \OOI{}$>1$ result holds across both inference regimes.
Under local inference (full \RAPL{}, $n=\nLocalRuns{}$ runs),
\OOI{} ranges $\ooiLocalMin\times$--$\ooiLocalMax\times$.
Under remote inference (client-side \RAPL{} only,
$n=\nApiRuns{}$ runs), \OOI{}$>1$ is confirmed across the
same task families. Because the two regimes use different
hardware paths, different model sizes, and different
energy measurement scopes, yet produce consistent directional
results, the overhead cannot be attributed to the inference
substrate --- it originates in the orchestration layer.
 
\begin{figure}[t]
  \includegraphics[width=\columnwidth]{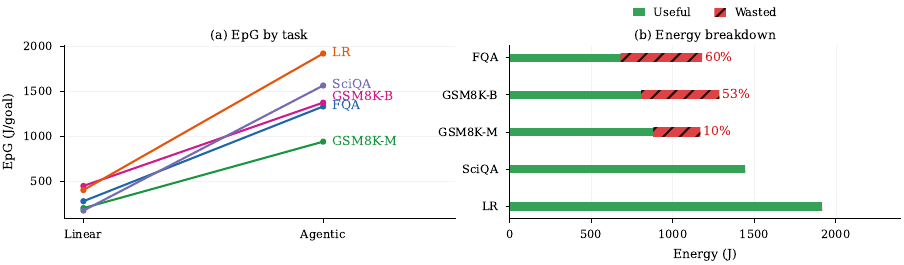}
  \caption{\textbf{[C5 --- Retry waste.]}
    (a) Mean \EpG{} linear--agentic slope per task: all
    reasoning families show consistent agentic overhead.
    (b) Useful (green) vs wasted (red hatched) energy per task:
    failed attempts account for \wasteEnergyPct\% of total
    agentic energy.}
  \label{fig:waste}
\end{figure}
 
\begin{figure}[t]
  \includegraphics[width=\columnwidth]{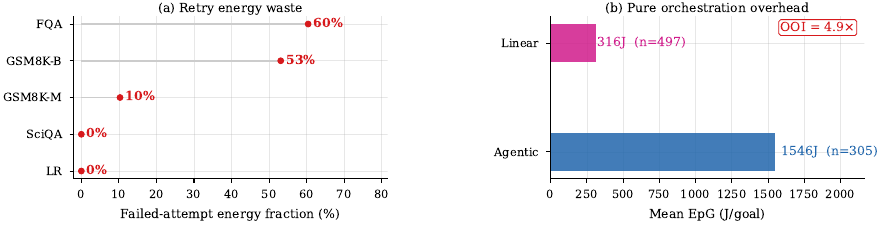}
  \caption{\textbf{[C5 --- Pure orchestration proof.]}
    (a) Retry waste fraction per task: several task families
    show zero retry waste yet exhibit \OOI{}$>1$ in panel (b),
    confirming that retry amplification and structural control-flow
    overhead are two independent mechanisms.
    (b) On $n=\nPureOrchAg{}$ goals with zero retry waste,
    agentic still consumes $\pureOrchOOI\times$ more energy
    than linear --- structural orchestration overhead
    independent of retry behavior.}
  \label{fig:combined_orch}
\end{figure}
 
\subsection{Phase Power Characterisation}
\label{sec:phases}
 
Table~\ref{tab:phase_power} summarises per-phase power and duration
under local inference (Ollama/TinyLlama, $n=\nLocalRuns{}$ runs).
All three phases are measured and non-zero, confirming full phase
coverage with no external wait. Figure~\ref{fig:decomp}(c) shows phase energy decomposition
across canonical runs.
 
Under remote inference (Groq/llama-3.3-70b, $n=\nApiRuns{}$ runs),
the local CPU draws $\planPowerApi$\,W during the planning phase yet
averages $\overallPowerApi$\,W across the full task duration
(mean API wait: $\apiWaitMs$\,ms), with the gap attributable to
orchestration activity during remote token generation. The elevated overall
power reveals that the orchestration framework maintains non-trivial
local CPU activity during remote token generation --- coordination,
state management, and response buffering that inference-level
benchmarks~\cite{mlperf} cannot observe because they do not instrument
the client-side orchestration layer. This local orchestration draw is
energy that standard
benchmarks would misattribute to inference compute; our
phase-resolved measurement captures it explicitly.
 
\begin{table*}[t]
\scriptsize
\setlength{\tabcolsep}{4pt}
\renewcommand{\arraystretch}{1.1}
\caption{Per-phase energy, power, and duration: agentic workflows,
  local inference (Ollama/TinyLlama, $n=\nLocalRuns{}$, full \RAPL{}
  compute) vs.\ remote inference (Groq/llama-3.3-70b, $n=\nApiRuns{}$,
  client-side orchestration only).}
\label{tab:phase_power}
\begin{tabularx}{\textwidth}{lrrrrrrrX}
\toprule
& \multicolumn{3}{c}{\textbf{Local (full LLM compute)}}
& \multicolumn{3}{c}{\textbf{Remote (orchestration only)}}
& \\
\cmidrule(lr){2-4}\cmidrule(lr){5-7}
\textbf{Phase}
  & \textbf{ms} & \textbf{W} & \textbf{J}
  & \textbf{ms} & \textbf{W} & \textbf{J}
  & \textbf{Physical interpretation} \\
\midrule
\rowcolor{hdrblue!25}
Planning
  & $\planTimeLocalMs$ & $\planPowerLocal$ & $\planEnergyLocalJ$
  & $\planTimeApiMs$   & $\planPowerApi$   & $\planEnergyApiJ$
  & LLM reasons over goal; dominant energy sink \\
\rowcolor{hdrgreen!20}
Execution
  & $\execTimeLocalMs$ & $\execPowerLocal$ & $\execEnergyLocalJ$
  & $\execTimeApiMs$   & $\execPowerApi$   & $\execEnergyApiJ$
  & Tool dispatch + LLM generation; API wait here \\
\rowcolor{hdrorange!20}
Synthesis
  & $\synthTimeLocalMs$ & $\synthPowerLocal$ & $\synthEnergyLocalJ$
  & $\synthTimeApiMs$   & $\synthPowerApi$   & $\synthEnergyApiJ$
  & Response formatting; shortest phase \\
\bottomrule
\end{tabularx}
\end{table*}
 
\subsection{Measurement Scope}
\label{sec:scope}
 
\ALEMS{} measures local CPU energy via \RAPL{}. GPU energy,
network interface energy, and remote server-side computation
are not directly measured. For remote inference \OOI{}, server-side energy approximately
cancels in the ratio since both workflows invoke the same remote
model. The cancellation is not exact: agentic workflows trigger
additional server-side calls during planning and retry phases
that linear workflows do not, making client-side
\OOI{}$=\headlineOOIClean\times$ a lower bound on the
fully-attributed remote overhead. Local inference \OOI{} is
not subject to this limitation: \RAPL{} captures full package
energy including all LLM computation. The canonical result is obtained on
a single hardware platform ($\Hhard=$\,\texttt{ebe694229b1b9d87})
using TinyLlama-1B for local inference; generalization to
larger models and multi-platform deployments is future work.
The \OOI{}$>1$ result under remote inference (Groq/llama-3.3-70b,
$n=\nApiRuns{}$ runs) --- where local \RAPL{} captures only
orchestration-layer activity with no local LLM computation ---
provides independent evidence that the overhead originates in orchestration, not model size.
A-LEMS validates the measurement framework on a 
single hardware-grounded instance; the thesis is 
unit necessity, not population characterization — 
a single real system suffices to demonstrate that 
inference-level accounting fails and goal-level 
accounting does not.
\section{Related Work}
\label{sec:related}

\subsection{Inference-Level Energy Reporting}
Strubell et al.~\cite{strubell2019energy} and Patterson et al.~\cite{patterson2021carbon}
establish foundational work on energy and carbon reporting for large-scale neural
network training and inference. MLPerf Power~\cite{tschand2024mlperfpower} extends
this to a comprehensive benchmarking methodology spanning 60 systems and 1,841
reproducible measurements across µW-to-MW deployment scales, establishing
energy efficiency as a first-class metric in ML system evaluation. These efforts
share a common assumption: inference is the atomic unit of measurement. For
classical single-pass workloads this holds. In agentic systems, where a single
user goal triggers planning loops, tool dispatch, retries, and recovery cycles
whose depth is determined at runtime, the inference boundary no longer coincides
with the task boundary. Counting inferences measures implementation behavior,
not goal completion cost.

The \textsc{ml.energy} benchmark~\cite{chung2025mlenergy} automates inference 
energy measurement with service-aware accounting across 40 model architectures, 
normalizing by request --- coherent for single-pass serving workloads --- but 
does not extend to multi-step agentic execution where request count is determined 
at runtime rather than by task definition.

\subsection{Hardware-Level Energy Characterization}
Proxy-based estimation using FLOPs or thermal design power (TDP) remains common but
cannot distinguish computation from orchestration overhead or attribute cost across
execution phases. Patel et al.~\cite{patel2024polca} take a direct measurement approach,
extensively characterizing GPU power consumption patterns for LLM training and inference
at datacenter scale, showing that inference clusters carry substantial headroom for power
oversubscription and proposing POLCA to exploit it. Their work operates at infrastructure
granularity, optimizing power allocation assuming inference as the atomic unit.
\ALEMS{} operates at a finer granularity: it attributes energy to individual workflow
phases and successful goals, making orchestration overhead visible where infrastructure-level
tools cannot.

\subsection{System-Level Power Profiling Tools}

PowerAPI~\cite{powerapi}, Scaphandre~\cite{scaphandre}, and CodeCarbon~\cite{codecarbon}
provide system-level power measurement infrastructure that \ALEMS{} builds upon at the
signal layer. None defines a goal-level energy unit, attributes cost across orchestration
phases, or accounts for retry-driven energy amplification(Table~\ref{tab:comparison}). \EpG{} is constructed on top
of these signals with an explicit attribution boundary and a goal-level denominator that
these tools do not provide.

\subsection{Agentic AI System Characterization}

Raj et al.~\cite{raj2025agentic} characterize agentic AI execution from a CPU-centric
perspective, identifying orchestration and tool dispatch as dominant sources of
latency and CPU utilization on heterogeneous CPU-GPU systems. Their scheduling
optimizations reduce end-to-end latency by up to 3.9$\times$. Where their work
optimizes latency under fixed inference cost assumptions, \ALEMS{} measures the
energy cost of orchestration structure itself: the overhead that persists even
after scheduling is optimized.

Chen et al.~\cite{chen2026networking} survey networking-aware energy efficiency in
distributed agentic inference, proposing a taxonomy across model simplification,
computation control, and cross-layer co-design. The survey establishes that
computation and communication costs compound across multi-step pipelines, but
does not provide a hardware-grounded unit for attributing energy to individual
orchestration decisions. \EpG{} fills this gap by measuring goal-level energy
directly from RAPL counters under a tightly defined attribution boundary.

\begin{table*}[t]
\centering
\scriptsize
\setlength{\tabcolsep}{4pt}
\caption{Comparative analysis of energy measurement systems.
  \textbf{\checkmark}\,=\,supported; $\sim$\,=\,partial;
  $\times$\,=\,not supported; $-$\,=\,not applicable.
  \ALEMS{} is the only system providing goal-level energy accounting,
  explicit attribution boundaries, retry energy capture, and
  a reproducibility protocol for agentic AI workloads.}
\label{tab:comparison}
\renewcommand{\arraystretch}{1.15}
\begin{tabular}{@{}lcccccccc@{}}
\toprule
\cellcolor{violet!30}\rotatebox{70}{\textit{\textbf{Dimension}}} & \cellcolor{alems!40}\rotatebox{70}{\textbf{\ALEMS{} (ours)}}
& \cellcolor{hdrblue}\rotatebox{70}{\textbf{PowerAPI~\cite{powerapi}}}
& \cellcolor{hdrblue}\rotatebox{70}{\textbf{Scaphandre~\cite{scaphandre}}}
& \cellcolor{hdrblue}\rotatebox{70}{\textbf{CodeCarbon~\cite{codecarbon}}}
& \cellcolor{hdrgreen}\rotatebox{70}{\textbf{MLPerf Power~\cite{tschand2024mlperfpower}}}
& \cellcolor{hdrgreen}\rotatebox{70}{\textbf{\textsc{ml.energy}~\cite{chung2025mlenergy}}}
& \cellcolor{hdrorange}\rotatebox{70}{\textbf{Raj et al.~\cite{raj2025agentic}}}
& \cellcolor{hdrorange}\rotatebox{70}{\textbf{Chen et al.~\cite{chen2026networking}}} \\
\midrule
\rowcolor{gray!20}
\multicolumn{9}{@{}l}{\textit{Energy Unit}} \\
Goal-level unit (J/goal)   & \checkmark & $\times$ & $\times$ & $\times$ & $\times$ & $\times$ & $\times$ & $\times$ \\
Retry energy captured       & \checkmark & $\times$ & $\times$ & $\times$ & $\times$ & $\times$ & $\times$ & $\times$ \\
Orchestration ratio (\OOI{})& \checkmark & $\times$ & $\times$ & $\times$ & $\times$ & $\times$ & $\times$ & $\times$ \\
\rowcolor{gray!20}
\multicolumn{9}{@{}l}{\textit{Measurement}} \\
Hardware measurement        & \checkmark & \checkmark & \checkmark & $\sim$ & \checkmark & \checkmark & $\sim$ & $\times$ \\
Attribution boundary        & \checkmark & $\times$ & $\times$ & $\times$ & $\sim$ & $\times$ & $\times$ & $\times$ \\
Phase-level attribution     & \checkmark & $\times$ & $\times$ & $\times$ & $\times$ & $\sim$ & \checkmark & $\times$ \\
Baseline subtraction        & \checkmark & $\sim$ & $\sim$ & $\times$ & $\sim$ & $\sim$ & $\times$ & $\times$ \\
\rowcolor{gray!20}
\multicolumn{9}{@{}l}{\textit{Methodology}} \\
Reproducibility protocol    & \checkmark & $\times$ & $\times$ & $\times$ & \checkmark & \checkmark & $\times$ & $\times$ \\
Agentic workflow support    & \checkmark & $\times$ & $\times$ & $\times$ & $\times$ & $\times$ & \checkmark & $\sim$ \\
Empirical validation        & \checkmark & $\times$ & $\times$ & $\times$ & \checkmark & \checkmark & \checkmark & $\times$ \\
\bottomrule
\end{tabular}
\end{table*}

\subsection{PUE as Historical Precedent}

PUE~\cite{barroso2007energy,pue_greengrids} succeeded not because it is
theoretically complete but because it is operationally defined, empirically
measurable, and names its gaming failure modes explicitly. A datacenter
can game PUE by running servers at full load during measurement; the
metric acknowledges this and specifies conditions under which comparisons
are valid. \EpG{} follows the same design philosophy: it defines its
attribution window precisely, binds results to hardware and environment
state, and documents the boundary failure modes that would corrupt
comparison across systems.

\section{Discussion, Limitations, and Future Directions}
\balance
\label{sec:conclusion}

\subsection{Limitations}

\EpG{} measures energy per successful goal under a tightly defined attribution boundary, 
but three limitations bound its current scope. First, success is binary: \EpG{} does not 
capture gradations in output quality, so a system that produces marginally correct answers 
at low energy looks identical to one that produces high-quality answers at the same cost. 
Second, the measurement substrate is local CPU via \RAPL{}. GPU energy, network interface energy, 
and remote server-side computation are not directly measured. For reasoning tasks under local inference 
this is sufficient; for remote inference deployments, \EpG{} captures orchestration-layer cost only, and 
full system attribution requires provider-disclosed server-side energy data that is not yet standardized~\cite{patel2024polca}. 
Third, \OOI{} is a comparative ratio requiring a matched linear baseline --- it cannot be reported for a system 
in isolation without defining the linear reference point.

\subsection{Metric Integrity}

Any efficiency metric creates incentives for gaming, and \EpG{} is no exception. 
A system operator could bias task selection toward easier goals to reduce reported \EpG{}, 
terminate execution early to reduce energy accumulation while preserving apparent success rate, 
or shift measurement boundaries to exclude expensive initialization phases. The three-hash reproducibility 
protocol mitigates boundary shifting by binding every run to its exact hardware and software context. 
Task distribution gaming is mitigated by reporting success rates and task family breakdowns alongside \EpG{}~\cite{pineau2021improving}
--- a system that achieves low \EpG{} on a restricted easy-task subset will show a different task distribution 
than one evaluated on the full benchmark. Early termination is detectable through completion statistics. 
These mitigations follow the same design philosophy as PUE: name the gaming modes and specify 
the measurement conditions under which comparisons are valid~\cite{barroso2007energy}.

\subsection{Future Directions}
\label{sec:future}

Local orchestration overhead is a measurable, substantial fraction of
local CPU energy in agentic workflows, and quantifying it is necessary,
though not sufficient, for full system energy accounting. Extending the
attribution substrate beyond local CPU to GPU, network, and
server-side components remains an open problem whose solution would
enable complete cross-layer energy accounting ~\cite{raj2025agentic,patel2024polca}.

\subsection{Conclusion}

Energy-per-inference is the wrong unit for agentic AI. It normalizes by implementation steps rather than goal completions,
 omits retry and recovery energy by construction, and cannot distinguish a system that succeeds on the first attempt from one
that fails four times before succeeding. These are not edge cases --- they define the normal operating behavior of production 
agentic systems~\cite{buggen2025,cemri2025}.

\EpG{} (Energy per Successful Goal, J/goal) corrects this by aggregating total workflow energy across all attempts 
and normalizing by successfully completed goals, grounded in hardware-level \RAPL{} measurement with explicit 
attribution boundaries and a three-hash reproducibility protocol. \OOI{} isolates the energy cost of orchestration 
structure relative to matched linear execution, revealing a $\headlineOOIClean\times$ overhead in canonical paired runs 
--- driven not by increased inference compute but by planning loops, retry cycles, and coordination overhead that linear 
workflows do not perform. The tool-task inversion (\OOI{}$<1$ for single-tool dispatch) confirms the metric is directionally
 correct: it measures orchestration structure, not a fixed upward bias.

Optimizing inference efficiency without accounting for orchestration structure addresses the wrong layer.
 \EpG{} and \OOI{} give system designers and benchmark authors
a hardware-grounded unit to measure and reduce the true
energy cost of agentic AI workloads.

\bibliographystyle{ACM-Reference-Format}
\bibliography{master}

\appendix

\section{A-LEMS System Architecture}
\label{appendix:arch}

\begin{figure}[h]
\centering
\begin{tikzpicture}[
  comp/.style={draw, rounded corners=3pt, minimum width=2.55cm,
               minimum height=0.65cm, font=\scriptsize, align=center,
               fill=blue!8, inner sep=3pt},
  store/.style={draw, rounded corners=3pt, minimum width=2.55cm,
                minimum height=0.65cm, font=\scriptsize, align=center,
                fill=green!8, inner sep=3pt},
  etl/.style={draw, rounded corners=3pt, minimum width=2.55cm,
              minimum height=0.65cm, font=\scriptsize, align=center,
              fill=orange!10, inner sep=3pt},
  arr/.style={-{Stealth[length=4pt]}, thick, gray!65},
  lbl/.style={font=\tiny\itshape, text=gray!65},
  node distance=0.35cm and 0.38cm,
]
  \node[lbl] at (-0.85, 0)    {Layer 1};
  \node[lbl] at (-0.85,-1.15) {Layer 2};
  \node[lbl] at (-0.85,-2.35) {Layer 3};
  \node[lbl] at (-0.85,-3.45) {Layer 4};

  \node[comp] (rapl)  {\RAPL{}\\100\,Hz};
  \node[comp, right=of rapl]  (perf) {perf\\10\,Hz};
  \node[comp, right=of perf]  (thm)  {Thermal\\1\,Hz};

  \node[store, below=0.48cm of perf] (q)
    {Non-blocking Queue\\(oldest-drop policy)};
  \draw[arr] (rapl) -- (q);
  \draw[arr] (perf) -- (q);
  \draw[arr] (thm)  -- (q);

  \node[comp, left=0.55cm of q] (inst)
    {Workload\\Instrumentation};
  \draw[arr] (inst) -- (q);

  \node[store, below=0.48cm of q] (db)
    {SQLite (53 tables)\\energy\_samples, orch\_events,\\methodology\_registry};
  \draw[arr] (q) -- (db);

  \node[etl, below=0.48cm of db] (etl)
    {Async ETL Pipeline\\attribution + coverage + provenance};
  \draw[arr] (db) -- (etl);

  \node[etl, below=0.33cm of etl] (views)
    {SQL Views + Registry API\\(researcher-facing)};
  \draw[arr] (etl) -- (views);
\end{tikzpicture}
\caption{\ALEMS{} four-layer architecture. Layer~1: multi-rate hardware
  collectors. Layer~2: non-blocking queue + workload instrumentation.
  Layer~3: structured SQLite storage (53 tables, analytical views).
  Layer~4: async ETL + methodology registry.}
\label{fig:arch}
\end{figure}

\ALEMS{} is implemented as a Python measurement harness running on the
same machine as the workload under study. The collector samples
\RAPL{} energy at 100\,Hz via a non-blocking queue with oldest-sample-drop
policy (prevents backpressure at high load). \texttt{energy\_samples}
stores raw cumulative \RAPL{} start/end values per interval, enabling
post-hoc recomputation with different attribution models.

The ETL pipeline runs three async jobs after each experiment pair:
phase attribution (aligns \texttt{energy\_samples} with
\texttt{orchestration\_events}), hardware metrics aggregation, and
energy attribution (computes all L0--L4 values). The methodology
registry stores every formula, confidence score, and literature
reference in a machine-readable database, queryable at runtime without
source code access.

\section{Experimental Configuration Files}
\label{app:configs}

\subsection{Failure Injection Study Configuration}

\begin{figure}[h]
\small
\begin{verbatim}
# config/experiment_configs/failure_injection_study.yaml
study:
  name: "Failure Injection Study"
  experiment_type: "failure_injection"
  experiment_goal: "Measure energy cost of failures and retry recovery"

tasks:
  - id: science_qa
  - id: tg_single_db
  - id: tg_single_calc
  - id: gsm8k_multi_step

execution:
  repetitions: 30
  cool_down_seconds: 30
  save_db: true

retry_policy:
  max_retries: 5
  retry_on_timeout: true
  retry_on_tool_error: true
  retry_on_api_error: true

failure_injection:
  enabled: true
  tool_failure_rate: 0.5
  timeout_rate: 0.5
\end{verbatim}
\caption{Failure injection configuration used in Section~8 experiments.}
\label{app:yaml_failure}
\end{figure}
\section{Research Query Library}
\label{appendix:queries}

The following canonical SQL queries are stored in
\texttt{config/research\_queries.sql} and are reproduced here for
transparency. Researchers can reproduce all paper figures and tables
directly from the \ALEMS{} SQLite database.

\begin{lstlisting}[style=sql,
  caption={RQ-01: Core EpG query across workflow types.}]
SELECT
    ge.workflow_type,
    AVG(ge.overhead_fraction)        AS avg_overhead_fraction,
    AVG(ge.orchestration_fraction)   AS avg_orchestration_fraction,
    AVG(ge.total_energy_uj)/1e6      AS avg_epg_j,
    COUNT(*)                         AS n_goals,
    SUM(ge.success)*1.0/COUNT(*)     AS success_rate
FROM goal_execution ge
JOIN experiments e ON ge.exp_id = e.exp_id
WHERE e.experiment_type != 'debug'
  AND e.experiment_valid = 1
GROUP BY ge.workflow_type;
\end{lstlisting}

\begin{lstlisting}[style=sql,
  caption={RQ-02: OOI per paired experiment.}]
SELECT
    ge_ag.goal_description,
    ge_ag.total_energy_uj / 1e6          AS epg_agentic_j,
    ge_lin.total_energy_uj / 1e6         AS epg_linear_j,
    ROUND(ge_ag.total_energy_uj * 1.0
      / NULLIF(ge_lin.total_energy_uj,0), 3) AS ooi
FROM goal_execution ge_ag
JOIN goal_execution ge_lin
  ON  ge_ag.task_id      = ge_lin.task_id
  AND ge_lin.workflow_type = 'linear'
WHERE ge_ag.workflow_type = 'agentic'
  AND ge_ag.total_energy_uj IS NOT NULL
  AND ge_lin.total_energy_uj IS NOT NULL
ORDER BY ooi DESC;
\end{lstlisting}

\begin{lstlisting}[style=sql,
  caption={RQ-03: Phase power from RAPL sample alignment.}]
SELECT
    oe.phase,
    oe.event_type,
    ROUND((oe.end_time_ns - oe.start_time_ns)/1e6, 2) AS ms,
    COUNT(es.sample_id)   AS samples,
    ROUND(AVG(
      (es.pkg_end_uj - es.pkg_start_uj) /
      CAST((es.sample_end_ns - es.sample_start_ns) AS REAL)
      * 1e3), 2)           AS avg_power_mw
FROM orchestration_events oe
LEFT JOIN energy_samples es
  ON  es.run_id           = oe.run_id
  AND es.sample_start_ns >= oe.start_time_ns
  AND es.sample_end_ns   <= oe.end_time_ns
WHERE oe.run_id = :run_id
GROUP BY oe.event_id
ORDER BY oe.start_time_ns;
\end{lstlisting}

\begin{lstlisting}[style=sql,
  caption={RQ-04: Boundary sensitivity --- EpG under
    three boundary choices.}]
SELECT
    ge.workflow_type,
    AVG(r.attributed_energy_uj)/1e6           AS strict_epg_j,
    AVG(r.attributed_energy_uj
        + COALESCE(r.pre_task_energy_uj,0)
        )/1e6                                  AS standard_epg_j,
    AVG(r.attributed_energy_uj
        + COALESCE(r.pre_task_energy_uj,0)
        + COALESCE(r.post_task_energy_uj,0)
        )/1e6                                  AS loose_epg_j
FROM goal_execution ge
JOIN goal_attempt ga ON ga.goal_id = ge.goal_id
JOIN runs r ON r.run_id = ga.run_id
JOIN experiments e ON e.exp_id = ge.exp_id
WHERE e.is_valid = 1 AND e.experiment_type != 'debug'
  AND r.attributed_energy_uj IS NOT NULL
GROUP BY ge.workflow_type;
\end{lstlisting}

\begin{lstlisting}[style=sql,
  caption={RQ-05: Reproducibility --- EpG variance per goal.}]
SELECT
    ge.goal_description,
    COUNT(*)                        AS n_reps,
    ROUND(AVG(ge.total_energy_uj)/1e6, 3) AS mean_epg_j,
    ROUND(
      100.0 * (MAX(ge.total_energy_uj)
               - MIN(ge.total_energy_uj))
      / AVG(ge.total_energy_uj), 2) AS range_pct
FROM goal_execution ge
JOIN experiments e ON ge.exp_id = e.exp_id
WHERE e.experiment_type IN ('normal','overhead_study')
  AND e.is_valid = 1
  AND ge.total_energy_uj IS NOT NULL
GROUP BY ge.task_id
HAVING COUNT(*) >= 3
ORDER BY range_pct ASC;
\end{lstlisting}

\begin{lstlisting}[style=sql,
  caption={RQ-06: Measurement correctness --- L1 validity.}]
SELECT
    SUM(CASE WHEN r.dynamic_energy_uj >= 0 THEN 1 ELSE 0 END)
        AS l1_valid_count,
    COUNT(*) AS total_runs,
    ROUND(
      100.0 * SUM(CASE WHEN r.dynamic_energy_uj >= 0
                       THEN 1 ELSE 0 END)
      / COUNT(*), 2) AS validity_pct,
    AVG(r.energy_sample_coverage_pct) AS avg_coverage_pct,
    SUM(CASE WHEN r.energy_sample_coverage_pct >= 95
             THEN 1 ELSE 0 END) AS gold_count,
    SUM(CASE WHEN r.energy_sample_coverage_pct >= 80
              AND r.energy_sample_coverage_pct < 95
             THEN 1 ELSE 0 END) AS acceptable_count
FROM runs r
JOIN experiments e ON r.exp_id = e.exp_id
WHERE e.is_valid = 1;
\end{lstlisting}

\section{Methodology Registry Summary}
\label{appendix:registry}

Table~\ref{tab:registry} summarises the full A-LEMS measurement
chain from raw hardware signals to derived metrics, with provenance
tier and paper section cross-references for each quantity.

\begin{table*}[t]
\scriptsize
\setlength{\tabcolsep}{4pt}
\renewcommand{\arraystretch}{1.1}
\caption{A-LEMS measurement chain: derivation method, provenance tier,
  and paper usage for all primary quantities.
  MEASURED = direct hardware read; CALCULATED = algebraic derivation;
  INFERRED = sample-based estimation.}
\label{tab:registry}
\begin{tabularx}{\textwidth}{llXllr}
\toprule
\textbf{Layer} & \textbf{Quantity} & \textbf{Formula / Definition}
  & \textbf{Type} & \textbf{Signal} & \textbf{Used in} \\
\midrule
\rowcolor{hdrblue!30}
L0 & $E_{\mathrm{pkg}}$ & RAPL MSR counter delta $[\tzero,\tone]$
   & MEASURED & \texttt{/sys/powercap} & §3, §4, C1 \\
\rowcolor{hdrblue!15}
L0 & $E_{\mathrm{baseline}}$ & $2\sigma$-filtered idle window mean ($\nBaselines{}$ windows)
   & MEASURED & RAPL MSR & §3, C2 \\
\rowcolor{hdrblue!30}
L1 & $E_{\mathrm{dyn}}$ & $E_{\mathrm{pkg}} - E_{\mathrm{baseline}}$
   & CALCULATED & L0 & §4 \\
\rowcolor{hdrgreen!25}
L2 & $f_{\mathrm{cpu}}$ & Process CPU fraction \texttt{/proc} counter deltas
   & CALCULATED & \texttt{/proc/stat} & §4 \\
\rowcolor{hdrgreen!15}
L2 & $E_{\mathrm{attr}}$ & $f_{\mathrm{cpu}} \times E_{\mathrm{dyn}}$
   & CALCULATED & L1+perf & §4, C3 \\
\rowcolor{hdrgreen!25}
L2 & $E_{\mathrm{overhead}}$ & Energy outside $[\tzero,\tone]$
   & CALCULATED & L0 & C3 \\
\rowcolor{hdrorange!25}
L3 & $E_{\mathrm{phase}}$ & $E_{\mathrm{attr}}$ summed over sample windows per phase
   & INFERRED & L2+events & §4, §8 \\
\rowcolor{hdrorange!15}
L3 & $C$ & $T_{\mathrm{obs}}/T_{\mathrm{total}}$ over $[\tzero,\tone]$
   & CALCULATED & samples & C1 \\
\rowcolor{gray!12}
L4 & \EpG{} & $\displaystyle\sum_{k} E_{\mathrm{attr},k} \;/\; |\mathcal{W}^{+}|$
   & CALCULATED & L2 & §6, C4 \\
\rowcolor{gray!20}
— & \OOI{} & $\EpG{}_{\mathrm{ag}} / \EpG{}_{\mathrm{lin}}$, matched pairs
   & CALCULATED & L4 & §7, C4, C5 \\
\rowcolor{gray!12}
— & $\tau_{\mathrm{orch}}$ & $\OOI{} - 1$; fractional overhead above parity
   & CALCULATED & OOI & C5 \\
\rowcolor{gray!20}
— & Waste\% & $E_{\mathrm{failed}} / E_{\mathrm{total}}$ per task family
   & CALCULATED & L2 & C5 \\
\rowcolor{gray!12}
— & Success rate & $|\mathcal{W}^{+}| / |\mathcal{W}|$
   & MEASURED & outcomes & C1, C4 \\
\bottomrule
\end{tabularx}
\end{table*}

\section{Platform Compliance Matrix}
\label{appendix:platform}

\begin{table}[t]
\scriptsize
\setlength{\tabcolsep}{4pt}
\renewcommand{\arraystretch}{1.1}
\caption{Platform support matrix for \ALEMS{}.
  \checkmark\,=\,full; $\sim$\,=\,partial; $\times$\,=\,not supported.}
\label{tab:platform}
\begin{tabularx}{\columnwidth}{lllllX}
\toprule
\textbf{Platform} & \textbf{Energy} & \textbf{Attribution} & \textbf{Timing} & \textbf{Mode} & \textbf{Notes} \\
\midrule
\rowcolor{hdrblue!25}
Linux x86 (\RAPL{}) & Direct  & Full       & \checkmark & MEASURED & Primary platform; all paper results \\
\rowcolor{hdrgreen!20}
Linux ARM (VM)      & ML est. & Full       & \checkmark & INFERRED & \texttt{alems-vnic} (Oracle Cloud ARM64) \\
\rowcolor{hdrorange!20}
macOS (IOKit)       & Partial & Full       & \checkmark & MEASURED & IOKit pending Mac hardware availability \\
\rowcolor{gray!10}
Windows/WSL         & None    & Timing only & \checkmark & LIMITED  & No RAPL access; timing metrics only \\
\bottomrule
\end{tabularx}
\end{table}
 
All timing uses \texttt{time.perf\_counter()}: Python's monotonic
nanosecond-resolution clock available on all platforms
(Table~\ref{tab:platform}); all \RAPL{}-derived
metrics store \texttt{NULL} with graceful degradation.

\section{Stochastic Workflow Model}
\label{appendix:stochastic}

\paragraph{Setup.}
We model a workflow unit as a stochastic process whose runtime structure
is determined by per-attempt success and per-step energy realizations.
The model is platform-agnostic: it does not assume any specific
orchestration framework, provider, or task family.

\begin{definition}[Workflow process]
\label{def:workflow}
A workflow unit for goal $g$ on system $\mathcal{S}$ is a tuple
$\mathcal{W}_g = (K, \{X_k\}_{k=1}^{K}, \{E_k\}_{k=1}^{K})$
where:
\begin{itemize}
  \item $K \in \{1, 2, \dots, K_{\max}\} \cup \{\infty\}$ is the (random) number of attempts before termination,
  \item $X_k \in \{0, 1\}$ is the success indicator of attempt $k$,
  \item $E_k \in \mathbb{R}_{\geq 0}$ is the energy of attempt $k$ (joules),
  \item $K_{\max} \in \mathbb{N}$ is a system-imposed retry budget.
\end{itemize}
We say the workflow \emph{succeeds} iff $X_K = 1$ and $K \leq K_{\max}$,
and write $\mathds{1}_g = 1$; otherwise $\mathds{1}_g = 0$.
\end{definition}

The total energy of a workflow is:
\[
E_{\mathrm{wf}, g} = \sum_{k=1}^{K} E_k.
\]

We assume $\{E_k\}$ are i.i.d.\ across attempts with
$\mathbb{E}[E_k] = \mu_E$ and $\mathrm{Var}(E_k) = \sigma_E^2$.
We assume per-attempt success follows a Bernoulli process with
parameter $p \in (0,1]$, i.e.,
$X_k \sim \mathrm{Bern}(p)$ i.i.d.

\footnote{
Independence of $X_k$ is a modeling assumption; retry behavior in real
systems may induce dependence via prompt drift or state conditioning.
We analyze this empirically in \S\ref{sec:validation}
}

Under this process, $K$ is a truncated geometric random variable:
\[
\Pr(K = k) =
\begin{cases}
(1 - p)^{k-1} p, & 1 \leq k \leq K_{\max} - 1, \\
(1 - p)^{K_{\max} - 1}, & k = K_{\max}.
\end{cases}
\]

The success probability is:
\[
\pi(p, K_{\max}) = \Pr(\mathds{1}_g = 1) = 1 - (1 - p)^{K_{\max}}.
\]


\paragraph{Closed-form expectations.}

\begin{theorem}[Expected workflow energy under truncated retry]
\label{thm:expected-energy}
Let $\mathcal{W}_g$ follow Definition~\ref{def:workflow}. Then:
\[
\mathbb{E}[E_{\mathrm{wf}, g}] =
\mu_E \cdot \mathbb{E}[K] =
\mu_E \cdot \frac{1 - (1 - p)^{K_{\max}}}{p}.
\]
\end{theorem}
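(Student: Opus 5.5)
The proof splits into two parts: a Wald-type identity giving $\mathbb{E}[E_{\mathrm{wf},g}] = \mu_E\,\mathbb{E}[K]$, and a computation of $\mathbb{E}[K]$ from the truncated geometric law stated just before the theorem. Throughout we take the stopping rule of Definition~\ref{def:workflow}: $K=\min\{k : X_k=1\}\wedge K_{\max}$. This makes $K$ bounded by $K_{\max}$, so the value $\infty$ never occurs and all sums are finite.

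For the first part, write
\[
E_{\mathrm{wf},g}=\sum_{k=1}^{K_{\max}} E_k\,\mathds{1}\{K\ge k\}.
\]
The event $\{K\ge k\}=\{X_1=\dots=X_{k-1}=0\}$ depends only on $X_1,\dots,X_{k-1}$. The key modelling requirement is that the energies $\{E_k\}$ are independent of the success indicators $\{X_k\}$, or at least that $E_k$ is independent of $X_1,\dots,X_{k-1}$. The stated assumptions (i.i.d.\ energies, i.i.d.\ Bernoulli successes) do not spell this out, so we make it explicit as part of the model. With it, $\mathbb{E}[E_k\mathds{1}\{K\ge k\}]=\mu_E\Pr(K\ge k)$. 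Linearity over a finite sum, with no integrability issues since $K\le K_{\max}$ and $\mu_E<\infty$, then gives
\[
\mathbb{E}[E_{\mathrm{wf},g}]=\mu_E\sum_{k=1}^{K_{\max}}\Pr(K\ge k)=\mu_E\,\mathbb{E}[K],
\]
using the tail-sum formula for nonnegative integer variables. Equivalently, one can condition on $K$ and argue via independence.

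For the second part, the law of $K$ gives $\Pr(K\ge k)=(1-p)^{k-1}$ for $1\le k\le K_{\max}$: each of the first $k-1$ attempts must fail, and truncation does not affect these events. Summing the geometric series,
\[
\mathbb{E}[K]=\sum_{k=1}^{K_{\max}}(1-p)^{k-1}=\frac{1-(1-p)^{K_{\max}}}{p},
\]
valid for $p\in(0,1]$. As a sanity check, this also equals $\pi(p,K_{\max})/p$.

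The main obstacle is the first step. If $E_k$ could correlate with $X_k$ (the paper notes that failed attempts tend to cost more), then the identity would still hold, because $\{K\ge k\}$ depends only on earlier indicators. It would fail, however, if $E_k$ depended on earlier outcomes. We would therefore state the independence assumption precisely, ideally in the weaker form that $E_k$ is independent of $(X_1,\dots,X_{k-1})$, and note that this weaker form suffices.
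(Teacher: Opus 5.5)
Your proof is correct and takes the same approach as the paper. The paper's proof is one line: Wald's identity gives $\mathbb{E}[E_{\mathrm{wf},g}]=\mu_E\,\mathbb{E}[K]$, and then the truncated geometric expectation is evaluated. Your indicator decomposition $\sum_{k} E_k\mathds{1}\{K\ge k\}$ is the standard proof of that Wald step, and your tail sum $\sum_{k=1}^{K_{\max}}(1-p)^{k-1}$ is that evaluation. Your explicit requirement that $E_k$ be independent of $(X_1,\dots,X_{k-1})$ is a real improvement, because the paper invokes Wald without ever stating how the energies and the success indicators relate.
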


\begin{proof}
By Wald's identity,
$\mathbb{E}[E_{\mathrm{wf}, g}] = \mu_E \mathbb{E}[K]$.
Evaluating the truncated geometric expectation yields the result.
\end{proof}

\begin{corollary}[Untruncated limit]
As $K_{\max} \to \infty$, $\mathbb{E}[K] \to 1/p$.
\end{corollary}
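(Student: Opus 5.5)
The statement to prove is the Corollary: as $K_{\max}\to\infty$, $\mathbb{E}[K]\to 1/p$. I would derive it directly from the closed form in Theorem~\ref{thm:expected-energy}, which gives $\mathbb{E}[K] = \bigl(1-(1-p)^{K_{\max}}\bigr)/p$ for fixed $p\in(0,1]$.

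The argument splits into two cases. If $p=1$, then $(1-p)^{K_{\max}}=0$ for every $K_{\max}\ge 1$. Hence $\mathbb{E}[K]=1=1/p$ identically and the limit is immediate. If $p\in(0,1)$, then $q:=1-p\in(0,1)$, so $q^{K_{\max}}\to 0$ geometrically. Therefore $\mathbb{E}[K]\to 1/p$, with explicit error $\bigl|\mathbb{E}[K]-1/p\bigr| = q^{K_{\max}}/p$. This exponential rate could be recorded as a by-product.

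As a sanity check independent of the theorem, I would also verify the closed form from the tail-sum formula, since the stated pmf places the residual mass $(1-p)^{K_{\max}-1}$ at $k=K_{\max}$. Then
\[
\mathbb{E}[K]=\sum_{k=1}^{K_{\max}}\Pr(K\ge k)=\sum_{k=1}^{K_{\max}}(1-p)^{k-1},
\]
which is a finite geometric sum equal to $\bigl(1-(1-p)^{K_{\max}}\bigr)/p$. This confirms that Theorem~\ref{thm:expected-energy}'s expression is the right object to take the limit of.

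The only potential obstacle is interpretive: the definition also allows $K=\infty$. I would note that for $p>0$ the untruncated geometric satisfies $\Pr(K=\infty)=\lim_{n\to\infty} q^n=0$. Monotone convergence applied to $K\wedge K_{\max}\uparrow K$ then identifies the limit with the mean of the untruncated geometric law, $\sum_{k\ge1}k q^{k-1}p=1/p$. So the corollary's limit coincides with the untruncated expectation, as its name suggests.
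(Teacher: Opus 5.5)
Your proposal is correct and matches the paper's route: the paper gives no separate proof and simply reads the corollary off the closed form $\mathbb{E}[K]=\bigl(1-(1-p)^{K_{\max}}\bigr)/p$ from Theorem~\ref{thm:expected-energy}, since $(1-p)^{K_{\max}}\to 0$ for $p\in(0,1]$. Your tail-sum check of that closed form and the monotone-convergence coupling $K\wedge K_{\max}\uparrow K$ with the untruncated geometric are both valid additions, but the corollary does not need them.
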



\paragraph{EpG as a ratio estimator.}

\begin{definition}[Population EpG]
\label{def:epg-pop}
For population $\Pi$ with measure $\nu$:
\[
\mathrm{EpG}^{\star} =
\frac{\mathbb{E}_\nu[E_{\mathrm{wf},g}]}{\mathbb{E}_\nu[\mathds{1}_g]}
=
\frac{\mathbb{E}_\nu\!\left[\mu_E \cdot \frac{1 - (1-p)^{K_{\max}}}{p}\right]}
{\mathbb{E}_\nu\!\left[1 - (1-p)^{K_{\max}}\right]}.
\]
\end{definition}

Empirical estimator:
\[
\widehat{\mathrm{EpG}}_N =
\frac{\sum_{j=1}^{N} E_{\mathrm{wf}, j}}
     {\sum_{j=1}^{N} \mathds{1}_j}.
\]


\begin{proposition}[Consistency and asymptotic normality]
\label{prop:consistency}
Assume $\Pr(\mathds{1}_g = 1) > 0$ and
$\mathbb{E}[E_{\mathrm{wf},g}^2] < \infty$. Then:
\[
\widehat{\mathrm{EpG}}_N \xrightarrow{a.s.} \mathrm{EpG}^{\star},
\quad
\sqrt{N}(\widehat{\mathrm{EpG}}_N - \mathrm{EpG}^{\star})
\xrightarrow{d} \mathcal{N}(0, V_{\mathrm{EpG}}).
\]
\end{proposition}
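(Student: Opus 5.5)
The proof treats $\widehat{\mathrm{EpG}}_N$ as a ratio of two sample means and applies the strong law of large numbers plus the delta method. Assume the workflow units $j=1,\dots,N$ are i.i.d.\ draws from the population $\Pi$ under $\nu$. For each $j$, the goal's parameter $p$ is drawn from $\nu$, and then $(K_j,\{X_k\},\{E_k\})$ is generated as in Definition~\ref{def:workflow}. Each unit then yields an i.i.d.\ pair $Z_j=(E_{\mathrm{wf},j},\mathds{1}_j)$. Write $m_1=\mathbb{E}_\nu[E_{\mathrm{wf},g}]$ and $m_2=\mathbb{E}_\nu[\mathds{1}_g]=\Pr(\mathds{1}_g=1)>0$. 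Dividing numerator and denominator of the estimator by $N$ gives $\widehat{\mathrm{EpG}}_N=\bar E_N/\bar I_N$, where $\bar E_N=N^{-1}\sum_j E_{\mathrm{wf},j}$ and $\bar I_N=N^{-1}\sum_j\mathds{1}_j$. Theorem~\ref{thm:expected-energy}, applied conditionally on $p$ and then integrated against $\nu$, identifies $m_1$ with the numerator of Definition~\ref{def:epg-pop}. The success-probability formula $\pi(p,K_{\max})$ does the same for $m_2$. Hence $\mathrm{EpG}^\star=m_1/m_2$.

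\emph{Consistency.} The second-moment assumption implies $\mathbb{E}|E_{\mathrm{wf},g}|<\infty$, and $\mathds{1}_g$ is bounded. The SLLN therefore gives $\bar E_N\to m_1$ and $\bar I_N\to m_2$ almost surely. Since $m_2>0$, the map $g(x,y)=x/y$ is continuous at $(m_1,m_2)$, so the continuous mapping theorem yields $\widehat{\mathrm{EpG}}_N\to m_1/m_2$ a.s.

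The estimator is undefined on the event $\bar I_N=0$, and this needs care. Because $\bar I_N\to m_2>0$ a.s., almost every sample path has $\bar I_N>0$ for all large $N$. So the convergence statement is unaffected by any convention for the finitely many early indices. For the distributional statement, $\Pr(\bar I_N=0)=(1-m_2)^N\to0$, so the event is asymptotically negligible.

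\emph{Asymptotic normality.} Both $\mathbb{E}[E_{\mathrm{wf}}^2]<\infty$ and $\mathbb{E}[\mathds{1}^2]\le1$ hold, so the vector $Z_j$ has a finite covariance matrix $\Sigma$ with entries $\sigma_{11}=\mathrm{Var}(E_{\mathrm{wf}})$, $\sigma_{22}=m_2(1-m_2)$ and $\sigma_{12}=\mathrm{Cov}(E_{\mathrm{wf}},\mathds{1})$. The multivariate CLT gives
\[
\sqrt N\bigl((\bar E_N,\bar I_N)-(m_1,m_2)\bigr)\xrightarrow{d}\mathcal N(0,\Sigma).
\]
The map $g$ is differentiable at $(m_1,m_2)$ with gradient $\nabla g=(1/m_2,\,-m_1/m_2^2)$. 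The delta method then gives the claim with
\[
V_{\mathrm{EpG}}=\nabla g^\top\Sigma\nabla g=\frac{1}{m_2^2}\,\mathrm{Var}\bigl(E_{\mathrm{wf},g}-\mathrm{EpG}^\star\,\mathds{1}_g\bigr).
\]
This is the standard ratio-estimator variance, and I would state it explicitly since the proposition leaves $V_{\mathrm{EpG}}$ implicit.

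\emph{Main obstacle.} The main work is not the limit theorems but justifying the i.i.d.\ sampling model and the identification of $m_1$ with the closed form in Definition~\ref{def:epg-pop}. Theorem~\ref{thm:expected-energy} needs $K$ to be a stopping time for the attempt sequence, with $E_k$ independent of $\{K\ge k\}$, so that Wald's identity applies. I would make this explicit by assuming the success indicators $\{X_k\}$ independent of the energies $\{E_k\}$. Note that under the model $K\le K_{\max}$ almost surely, so the value $K=\infty$ allowed in Definition~\ref{def:workflow} has probability zero and $\mathbb{E}[E_{\mathrm{wf}}^2]<\infty$ follows from $\sigma_E^2<\infty$ alone. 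If dependence between $E_k$ and $X_k$ were allowed, I would drop the closed form and note that the proof above still holds with $m_1$ taken as the abstract mean $\mathbb{E}_\nu[E_{\mathrm{wf},g}]$.
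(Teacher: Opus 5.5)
Your proposal is correct and follows the same route as the paper, whose proof is the single line ``apply the SLLN and the delta method for a ratio of means''. You have written that argument out in full: the SLLN plus continuous mapping for consistency, and the bivariate CLT plus the delta method with gradient $(1/m_2,-m_1/m_2^2)$ for asymptotic normality. The explicit variance $V_{\mathrm{EpG}}=m_2^{-2}\,\mathrm{Var}(E_{\mathrm{wf},g}-\mathrm{EpG}^{\star}\mathds{1}_g)$, the handling of the event $\bar I_N=0$, and the stated Wald conditions are details the paper leaves implicit, not a different argument.
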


\begin{proof}
Apply SLLN and delta method for ratio of means
\cite{Vaart1998AsymptoticStat}.
\end{proof}

\begin{remark}[Covariance sign]
$\mathrm{Cov}(E_{\mathrm{wf}}, \mathds{1}) < 0$ since failures
accumulate energy but contribute nothing to the denominator.
\end{remark}


\begin{remark}[Denominator choice]
Using $|\mathcal{W}^{+}|$ measures cost of delivered work;
using $|\mathcal{W}|$ measures per-attempt cost.
We adopt the former for system-level accounting consistency.
\end{remark}


\paragraph{Failure monotonicity and retry sensitivity.}

\begin{proposition}[Failure monotonicity]
Holding $\mu_E, K_{\max}$ fixed, $\mathrm{EpG}^{\star}$ decreases in $p$.
\end{proposition}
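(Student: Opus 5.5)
The plan is to use Theorem~\ref{thm:expected-energy} to write both the numerator and the denominator of $\mathrm{EpG}^{\star}$ in Definition~\ref{def:epg-pop} as explicit functions of $p$. We then show the numerator is non-increasing in $p$ and the denominator is non-decreasing, so the ratio is decreasing without any derivative computation.

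\textbf{Step 1: rewrite the numerator.} By Theorem~\ref{thm:expected-energy}, the per-goal expected energy is $\mu_E\,\mathbb{E}[K]$. Expanding the truncated geometric mean as a finite geometric sum gives
\[
\mathbb{E}[E_{\mathrm{wf},g}] \;=\; \mu_E \,\frac{1-(1-p)^{K_{\max}}}{p} \;=\; \mu_E \sum_{k=0}^{K_{\max}-1} (1-p)^k ,
\]
with the value $\mu_E$ at $p=1$. Each term $(1-p)^k$ is non-increasing in $p\in(0,1]$, and for $k\ge 1$ it is strictly decreasing. So, holding $\mu_E$ and $K_{\max}$ fixed, the numerator integrand is non-increasing in $p$, and strictly decreasing whenever $K_{\max}\ge 2$.

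\textbf{Step 2: the denominator.} The denominator integrand is the success probability $\pi(p,K_{\max}) = 1-(1-p)^{K_{\max}}$. It is non-decreasing in $p$, strictly increasing on $(0,1)$, and strictly positive for $p>0$.

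\textbf{Step 3: pass to the population.} In the single-goal case, where $\nu$ is a point mass at $p$, the factor $\pi$ cancels and
\[
\mathrm{EpG}^{\star} = \frac{\mu_E}{p},
\]
which is strictly decreasing in $p$. For a general $\nu$, we interpret ``increasing $p$'' as a pointwise increase, i.e.\ a family $p_t(g)$ that is non-decreasing in $t$ for every goal $g$. Monotonicity of expectations then gives two facts:
\begin{itemize}
\item $\mathbb{E}_\nu\bigl[\sum_{k}(1-p_t)^k\bigr]$ is non-increasing in $t$;
\item $\mathbb{E}_\nu[\pi(p_t,K_{\max})]$ is non-decreasing in $t$ and positive.
\end{itemize}
A ratio of a positive non-increasing numerator and a positive non-decreasing denominator is non-increasing. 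It is strictly decreasing as soon as the increase in $p$ is strict on a set of positive $\nu$-mass, provided $K_{\max}\ge 2$ or $p<1$ on that set.

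\textbf{Main obstacle.} The main difficulty is the population case. Writing $\mathrm{EpG}^{\star}$ as a $\pi$-weighted average of $\mu_E/p$ is tempting but awkward, because raising $p$ also shifts the weights. The pointwise-monotone decomposition in Steps 1 and 2 avoids this, since each piece moves in the favorable direction separately. Two points must be stated explicitly:
\begin{itemize}
\item the precise meaning of ``$\mathrm{EpG}^{\star}$ decreases in $p$'' when $p$ varies across goals, which we take to be the pointwise ordering above;
\item that Wald's identity in Theorem~\ref{thm:expected-energy} requires the $E_k$ to be independent of the stopping decision. This is already implicit in the i.i.d.\ assumptions of Definition~\ref{def:workflow}, and we would invoke it there.
\end{itemize}
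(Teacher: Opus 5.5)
Your proof is correct. The paper states this proposition with no proof at all; its only supporting material is the closed form of Theorem~\ref{thm:expected-energy} and the explicit formula in Definition~\ref{def:epg-pop}. The natural reading, consistent with the paper's IID-degeneracy remark, is the homogeneous case of your Step~3, where $\pi(p,K_{\max})$ cancels and $\mathrm{EpG}^{\star}=\mu_E/p$. That cancellation only covers a point-mass $\nu$. Once $p$ varies across goals, the ratio is a $\pi$-weighted average of $\mu_E/p$ and genuinely depends on $K_{\max}$. For example, $p\in\{0.1,0.9\}$ with equal mass gives $2\mu_E$ at $K_{\max}=1$ but about $5.56\,\mu_E$ as $K_{\max}\to\infty$. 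So the population definition needs something like your decomposition. Your identity $\pi(p,K_{\max})/p=\sum_{k=0}^{K_{\max}-1}(1-p)^k$ makes the numerator integrand itself non-increasing, which neutralizes the weight shift you correctly flag. This gives monotonicity for arbitrary $\nu$ under the pointwise ordering. When $\mu_E$ is constant it also covers first-order stochastic dominance of the law of $p$, via the quantile coupling.

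Three small fixes. First, strict decrease needs $\mu_E>0$; otherwise $\mathrm{EpG}^{\star}\equiv 0$. Second, your side condition ``$K_{\max}\ge 2$ or $p<1$'' is automatic: $p_{t_1}(g)<p_{t_2}(g)\le 1$ forces $p_{t_1}(g)<1$, so $\pi$ already increases strictly on that set. Third, your Wald caveat is fair, but it is an added assumption rather than one implicit in the paper. The paper assumes $\{E_k\}$ and $\{X_k\}$ are each i.i.d., but never that $E_k$ is independent of $X_1,\dots,X_{k-1}$. For this proposition you can sidestep the issue by starting directly from the explicit formula in Definition~\ref{def:epg-pop}.
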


\begin{proposition}[Retry sensitivity]
Holding $\mu_E, p$ fixed, $\mathrm{EpG}^{\star}$ is non-decreasing in $K_{\max}$,
with strict increase under non-i.i.d.\ attempts.
\end{proposition}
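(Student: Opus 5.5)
The plan is to reduce $\mathrm{EpG}^{\star}$ to a ratio of partial sums and then apply a mediant (ratio-of-sums) monotonicity argument. Write $q=1-p$.

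First, use the identity $(1-q^{K_{\max}})/p=\sum_{k=0}^{K_{\max}-1}q^{k}$, which is immediate from Theorem~\ref{thm:expected-energy}. Substituting it into Definition~\ref{def:epg-pop} gives
\[
\mathrm{EpG}^{\star}(K_{\max})=\mu_E\,\frac{\sum_{k=0}^{K_{\max}-1}a_k}{\sum_{k=0}^{K_{\max}-1}b_k},
\qquad a_k=\mathbb{E}_\nu[q^{k}],\quad b_k=\mathbb{E}_\nu[p\,q^{k}],
\]
where the denominator uses $1-q^{K_{\max}}=\sum_{k<K_{\max}}p\,q^{k}$.

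The first thing I would record is the degenerate case. If $\nu$ is a point mass at a single $p$, the ratio collapses to $\mu_E/p$ for every $K_{\max}$. So "non-decreasing" holds with equality, and this shows that strictness cannot come from the i.i.d.\ homogeneous model itself.

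Second, for general $\nu$, the key step is to show that $a_k/b_k$ is non-decreasing in $k$. Note that
\[
\frac{b_k}{a_k}=\frac{\mathbb{E}_\nu[p\,q^k]}{\mathbb{E}_\nu[q^k]}=\mathbb{E}_{\nu_k}[p],
\]
where $d\nu_k\propto q^k\,d\nu$ is an exponentially tilted measure. Because $q^{k}$ is decreasing in $p$, the family $\nu_k$ is increasing in the likelihood-ratio order toward small $p$. Concretely, $\tfrac{d}{dk}\mathbb{E}_{\nu_k}[p]=\mathrm{Cov}_{\nu_k}(p,\log q)\le 0$, since $p$ and $\log(1-p)$ are oppositely monotone (Chebyshev's association inequality). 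Hence $a_k/b_k$ is non-decreasing.

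Then I apply the standard mediant lemma. If $a_k/b_k$ is non-decreasing and $b_k>0$, then
\[
\frac{\sum_{k<K+1}a_k}{\sum_{k<K+1}b_k}\ \ge\ \frac{\sum_{k<K}a_k}{\sum_{k<K}b_k},
\]
because adding a term whose ratio is at least the current average cannot lower the average. This yields monotonicity in $K_{\max}$. Strictness holds whenever $\nu$ is non-degenerate, because the covariance above is then strictly negative.

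Third, I handle the "non-i.i.d.\ attempts" clause by replacing $\mu_E$ with attempt-indexed means $\mu_k=\mathbb{E}[E_k]$. By Wald/linearity the numerator becomes $\sum_{k<K_{\max}}\mu_{k+1}a_k$. If the $\mu_k$ are non-decreasing, as with escalating retries, the tilted term ratios $\mu_{k+1}a_k/b_k$ remain non-decreasing, and the same mediant argument applies. Strict increase then holds as soon as either $\mu_k$ is strictly increasing or $\nu$ is non-degenerate.

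The main obstacle is interpretive rather than technical. The clause ``strict increase under non-i.i.d.\ attempts'' is false as literally stated without extra structure: it can fail if $\mu_k$ decreases. I would therefore state the precise sufficient conditions: non-decreasing $\mu_k$, or heterogeneity in $p$ across goals. I would also check the tilting inequality carefully at the endpoint $p=1$, where $\log q=-\infty$; there I would argue directly with ratios of finite sums rather than derivatives.
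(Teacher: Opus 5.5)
The paper gives no proof of this proposition. The only supporting text is the ``IID degeneracy'' remark that follows it, which amounts to your first observation: for a single $p$ the ratio equals $\mu_E/p$, so ``non-decreasing'' holds with equality. The strictness clause is simply asserted. Your route is correct, and it proves considerably more than the paper does. You write $\mathrm{EpG}^{\star}$ as $\mu_E\sum_{k<K_{\max}}a_k/\sum_{k<K_{\max}}b_k$, show that $a_k/b_k$ is non-decreasing by exponential tilting, and close with the mediant lemma. For a fixed, non-degenerate population distribution $\nu$ of $p$, this makes $\mathrm{EpG}^{\star}$ strictly increasing in $K_{\max}$ even with i.i.d.\ attempts. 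It follows that the paper's remark ($\mathrm{EpG}^{\star}$ independent of $K_{\max}$ under i.i.d.\ assumptions) holds only when $\nu$ is a point mass. Your decreasing-$\mu_k$ example also correctly shows that the ``strict increase under non-i.i.d.\ attempts'' clause is false without added structure.

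A few refinements.
\begin{itemize}
\item \emph{Tilting step without derivatives.} This removes your $p=1$ concern. With $m_k=\mathbb{E}_\nu[q^k]$ you have $a_k=m_k$ and $b_k=m_k-m_{k+1}$. Then $a_k/b_k\le a_{k+1}/b_{k+1}$ is equivalent to $m_{k+1}^2\le m_k m_{k+2}$, which is Cauchy--Schwarz (taking $0^0=1$).
\item \emph{Strictness.} Do not justify it by a strictly negative covariance at every $k$. That covariance vanishes for $k\ge1$ when $\nu$ charges only $p=1$ and a single $p_0<1$, because the tilted measure is then a point mass. The mediant step only needs $a_0/b_0<a_K/b_K$, and the $k=0$ Cauchy--Schwarz inequality supplies this strictly whenever $q$ is non-constant.
\item \emph{Factoring out $\mu_E$.} Pulling $\mu_E$ outside $\mathbb{E}_\nu$ requires $\mu_E$ to be constant across goals, or independent of $p$; say so explicitly. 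Otherwise monotonicity can fail. For example, equal mass on $(p,\mu_E)=(1,10)$ and $(1/2,1)$ gives $22/3$ at $K_{\max}=1$ but $46/7$ at $K_{\max}=2$.
\item \emph{Which attempts are non-i.i.d.} The dependence the paper's footnote has in mind (prompt drift) acts on the success indicators, not only on energy means. The same mediant argument covers attempt-dependent $p_k$ for a goal. Attempt $k$ adds $\mu_k S_{k-1}$ to the numerator and $p_k S_{k-1}$ to the denominator, where $S_{k-1}=\prod_{j<k}(1-p_j)$. A clean sufficient condition is therefore that $\mu_k/p_k$ be non-decreasing in $k$.
\end{itemize}
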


\begin{remark}[IID degeneracy]
Under i.i.d.\ assumptions, $\mathrm{EpG}^{\star}$ is independent of $K_{\max}$.
Sensitivity emerges only under realistic system drift.
\end{remark}


\paragraph{OOI as paired estimator.}

\begin{definition}[Population OOI]
\[
\mathrm{OOI}^{\star} =
\frac{\mathrm{EpG}_A^{\star}}{\mathrm{EpG}_L^{\star}}.
\]
\end{definition}

\begin{proposition}[OOI asymptotic distribution]
\label{prop:ooi-asymp}
\[
\sqrt{N}(\widehat{\mathrm{OOI}}_N - \mathrm{OOI}^{\star})
\xrightarrow{d} \mathcal{N}(0, V_{\mathrm{OOI}}).
\]
\end{proposition}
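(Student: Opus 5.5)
The approach is to treat $\widehat{\mathrm{OOI}}_N$ as a smooth function of four sample means computed over the $N$ matched goal pairs, and to apply the multivariate CLT followed by the delta method.

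\textbf{Setup.} For pair $j$, collect the vector
\[
Z_j = \bigl(E^{A}_{\mathrm{wf},j},\ \mathds{1}^{A}_j,\ E^{L}_{\mathrm{wf},j},\ \mathds{1}^{L}_j\bigr)\in\mathbb{R}^4 .
\]
Here the superscripts denote the agentic and linear workflows on the same goal $g_j$. I will assume the pairs $Z_1,\dots,Z_N$ are i.i.d. across goals, with goals drawn from $\nu$. The two coordinates within a pair are allowed to be dependent, since they share the goal instance and the session. This dependence is exactly what the paired design exploits.

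I also need the hypotheses of Proposition~\ref{prop:consistency} to hold for both workflow types:
\begin{itemize}
\item $\Pr(\mathds{1}^A=1)>0$ and $\Pr(\mathds{1}^L=1)>0$;
\item $\mathbb{E}[(E^A_{\mathrm{wf}})^2]<\infty$ and $\mathbb{E}[(E^L_{\mathrm{wf}})^2]<\infty$.
\end{itemize}
These should be stated explicitly, since the proposition as written omits them. The indicator coordinates are bounded, so $Z_j$ has a finite covariance matrix $\Sigma$.

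\textbf{Limit of the sample means.} The multivariate CLT gives
\[
\sqrt N(\bar Z_N-\mu)\xrightarrow{d}\mathcal N(0,\Sigma),\qquad \mu=(m_A,\pi_A,m_L,\pi_L),
\]
where $m_\cdot$ is the mean workflow energy and $\pi_\cdot$ is the success probability of each workflow type.

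\textbf{Delta method.} Define
\[
h(z_1,z_2,z_3,z_4)=\frac{z_1/z_2}{z_3/z_4}=\frac{z_1z_4}{z_2z_3}.
\]
The empirical ratio of the two $\widehat{\mathrm{EpG}}$ estimators is exactly $h(\bar Z_N)$, because the factors of $N$ cancel. Moreover $h(\mu)=\mathrm{EpG}_A^\star/\mathrm{EpG}_L^\star=\mathrm{OOI}^\star$.

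The function $h$ is $C^1$ near $\mu$ provided $\pi_A,\pi_L>0$ and $m_L>0$. The condition $m_L>0$ is a further assumption that must be added; it holds whenever linear energy is not almost surely zero. Its gradient at $\mu$ is
\[
\nabla h(\mu)=\mathrm{OOI}^\star\Bigl(\tfrac1{m_A},-\tfrac1{\pi_A},-\tfrac1{m_L},\tfrac1{\pi_L}\Bigr).
\]
The delta method then yields the claim with
\[
V_{\mathrm{OOI}}=\nabla h(\mu)^\top\Sigma\,\nabla h(\mu).
\]

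\textbf{Interpretation of the variance.} Expanding this quadratic form writes $V_{\mathrm{OOI}}$ as $(\mathrm{OOI}^\star)^2$ times the variance of the log-linearised quantity
\[
E^A/m_A-\mathds{1}^A/\pi_A-E^L/m_L+\mathds{1}^L/\pi_L .
\]
A positive cross-covariance between $E^A$ and $E^L$, which pairing on the same goal induces, subtracts from this variance. This explains why the matched design sharpens the estimate.

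\textbf{Main obstacle: well-definedness at finite $N$.} The estimator is undefined on the event that some $\sum_j\mathds{1}_j=0$, and the zero-denominator convention of Section~\ref{sec:epg} leaves it undefined there. I will handle this by defining $\widehat{\mathrm{OOI}}_N$ arbitrarily on that event. By the SLLN, $\bar Z_N\to\mu$ almost surely with $\pi_A,\pi_L>0$, so the event has probability tending to zero. Modifying a sequence on events of vanishing probability does not change a distributional limit (Slutsky), which closes the argument.

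A second subtlety is the i.i.d. requirement across pairs. If goals are fixed rather than sampled from $\nu$, I would replace the CLT by Lindeberg--Feller for independent, non-identically distributed pairs. That version needs uniformly bounded $2+\delta$ moments, and $\Sigma$ becomes the limit of the averaged covariances.
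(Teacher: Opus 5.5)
Your proof is correct and follows the approach the paper implicitly relies on. The paper states this proposition without proof, leaning on the same SLLN-plus-delta-method reasoning it cites for Proposition~\ref{prop:consistency}; your joint CLT on the paired vector $(E^A_{\mathrm{wf}},\mathds{1}^A,E^L_{\mathrm{wf}},\mathds{1}^L)$ followed by the delta method for $h(z)=z_1z_4/(z_2z_3)$ is precisely that argument made explicit, and it is more careful than the paper's ``ratio of two convergent estimators'' phrasing, since marginal normality of each \EpG{} estimator would not suffice without joint convergence of the paired means (the only small slip is that the event on which you redefine the estimator should also include $\sum_j E^L_{\mathrm{wf},j}=0$, which the same SLLN argument handles because $m_L>0$).
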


\begin{remark}[Bootstrap preference]
We report percentile bootstrap 95\% CIs over 500 resamples
({\tt numpy.random.seed(42)}) for all ratio estimators.
\end{remark}


\paragraph{Jensen lower bound.}

\begin{proposition}[Convexity bound]
\[
\mathrm{EpG}^{\star} \geq \frac{\mathbb{E}[\mu_E]}{\bar{p}}.
\]
\end{proposition}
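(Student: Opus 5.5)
The plan is to read $\bar p$ as $\mathbb{E}_\nu[p]$ and to add one assumption: the per-attempt energy mean $\mu_E$ is independent of $p$ under $\nu$ (constant $\mu_E$ is the simplest case). Without this assumption, a goal population where cheap attempts coincide with low success probability could break the inequality. I will say this explicitly in the proof.

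Given independence, the numerator of Definition~\ref{def:epg-pop} factors:
\[
\mathbb{E}_\nu\!\left[\mu_E\,\frac{1-(1-p)^{K_{\max}}}{p}\right]
=\mathbb{E}_\nu[\mu_E]\,\mathbb{E}_\nu[h(p)],
\qquad
h(p)=\frac{1-(1-p)^{K_{\max}}}{p}.
\]
By Theorem~\ref{thm:expected-energy}, $h(p)=\mathbb{E}[K]$. The denominator of Definition~\ref{def:epg-pop} is $\mathbb{E}_\nu[p\,h(p)]$, because $p\,h(p)=1-(1-p)^{K_{\max}}$ holds pointwise. So the claim reduces to the scalar inequality
\[
\bar p\;\mathbb{E}_\nu[h(p)] \;\ge\; \mathbb{E}_\nu[p\,h(p)],
\]
that is, $\mathrm{Cov}_\nu\bigl(p,h(p)\bigr)\le 0$.

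The key step is the monotonicity of $h$. Expanding the geometric sum gives $h(p)=\sum_{k=0}^{K_{\max}-1}(1-p)^k$, which is non-increasing on $(0,1]$. The two functions $p\mapsto p$ and $p\mapsto h(p)$ are therefore oppositely monotone in the single random variable $p$. Chebyshev's association (covariance) inequality then gives $\mathbb{E}[p\,h(p)]\le \mathbb{E}[p]\,\mathbb{E}[h(p)]$. I will prove this inequality in one line with the symmetrization
\[
\mathbb{E}\bigl[(p-p')\bigl(h(p)-h(p')\bigr)\bigr]\le 0,
\]
where $p'$ is an independent copy of $p$. Dividing by $\mathbb{E}_\nu[p\,h(p)]>0$ yields $\mathrm{EpG}^\star\ge \mathbb{E}[\mu_E]/\bar p$.

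I will also check the $K_{\max}\to\infty$ limit, which explains the proposition's label. There the denominator tends to $1$ and the bound becomes $\mathbb{E}[\mu_E]\,\mathbb{E}[1/p]\ge \mathbb{E}[\mu_E]/\mathbb{E}[p]$, which is exactly Jensen's inequality for the convex map $1/p$. At $K_{\max}=1$ equality holds, which serves as a sanity check.

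The main obstacle is that Jensen's inequality cannot be applied directly in the truncated case. The success weights $1-(1-p)^{K_{\max}}$ tilt the population toward large $p$, so a naive ratio-of-expectations argument points the wrong way. The covariance reformulation above is the route I would use to avoid this. The secondary issue is justifying the independence of $\mu_E$ and $p$, which I will state as a hypothesis rather than try to derive.
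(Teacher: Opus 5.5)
The paper gives no proof of this proposition. Its only hint is the heading ``Jensen lower bound'', which points to Jensen's inequality for the convex map $p\mapsto 1/p$. Your argument is correct under your reading $\bar p=\mathbb{E}_\nu[p]$, and it takes a genuinely different route.

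Jensen alone yields the stated bound only in the untruncated limit $K_{\max}\to\infty$, where the denominator of $\mathrm{EpG}^{\star}$ becomes $1$. For finite $K_{\max}$, the natural Jensen step is taken under the success-tilted measure $d\tilde\nu\propto\bigl(1-(1-p)^{K_{\max}}\bigr)\,d\nu$, for which $\mathrm{EpG}^{\star}=\tilde{\mathbb{E}}[\mu_E/p]$. That step only gives $\mathbb{E}[\mu_E]/\tilde{\mathbb{E}}[p]$, which is weaker because $\tilde{\mathbb{E}}[p]\ge\bar p$. This is exactly the wrong-direction effect you identified. The Jensen route is a one-liner, but it works only in the limit, or if $\bar p$ is reinterpreted as the success-weighted mean.

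Your reduction to $\mathrm{Cov}_\nu\bigl(p,h(p)\bigr)\le 0$ avoids this. You settle it with the Chebyshev association inequality for the oppositely monotone pair $p$ and $h(p)=\sum_{k=0}^{K_{\max}-1}(1-p)^k$, so the argument rests on monotonicity rather than convexity. It proves the bound with the unweighted mean for every finite $K_{\max}$, recovers the Jensen form in the limit, and is tight at $K_{\max}=1$.

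You are also right that some hypothesis linking $\mu_E$ and $p$ is required, which the paper leaves implicit. Take $K_{\max}=2$ and an equal mixture of goals with $(p,\mu_E)=(1,1)$ and $(0.1,0.01)$. Then $\mathrm{EpG}^{\star}\approx 0.856$, while $\mathbb{E}[\mu_E]/\bar p=1.01/1.1\approx 0.918$, so the bound fails.

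Independence is more than you need, though. Your chain of inequalities holds whenever $\mathrm{Cov}_\nu\bigl(\mu_E,h(p)\bigr)\ge 0$, for instance when $\mu_E$ is a non-increasing function of $p$. In that case the inequality $\mathbb{E}_\nu[\mu_E h(p)]\ge\mathbb{E}_\nu[\mu_E]\,\mathbb{E}_\nu[h(p)]$ replaces the exact factorization.
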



\paragraph{Summary.}
EpG is a ratio estimator over a stochastic workflow process with
provable consistency, bounded expectations, and measurable sensitivity
to system reliability structure (Figure~\ref{fig:estimator-conv}).

\begin{figure}[t]
  \centering
  \includegraphics[width=\columnwidth]{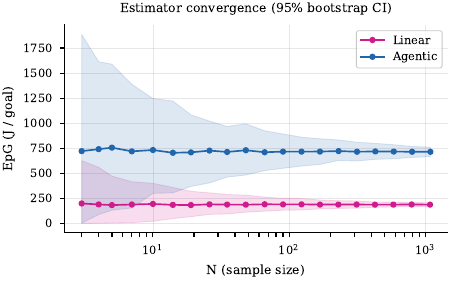}
  \caption{Empirical convergence of $\widehat{\mathrm{EpG}}_N$ as $N$
  grows. Shaded bands show 95\% bootstrap CIs at each subsample size;
  the $1/\sqrt{N}$ contraction predicted by Proposition~\ref{prop:consistency}
  is visible.}
  \label{fig:estimator-conv}
\end{figure}

\section{Execution Ontology}
\label{appendix:ontology}

\ALEMS{} defines a four-level execution hierarchy stored in the database:
\emph{Experiment} $\to$ \emph{Run} (linear or agentic) $\to$
\emph{Phase} (planning/execution/synthesis) $\to$
\emph{Event} (LLM call, tool call, retry). Each level maps to database
tables with foreign key relationships, enabling energy attribution at
any granularity.

The \texttt{goal\_execution} table is the paper's fundamental unit of
analysis: one row per user goal across one experiment. A goal may require
multiple attempts (retries). This table aggregates all attempt outcomes
into a single success/failure verdict with full energy accounting.
ETL columns (e.g., \texttt{total\_energy\_uj},
\texttt{overhead\_fraction}) are populated asynchronously after each
experiment by \texttt{goal\_execution\_etl.py}.

Energy conservation is enforced at the goal level:
$E_{\mathrm{workflow}} = \sum_i E_{\mathrm{attempt},i}$, verified
within 1\,mJ tolerance for all goals with successful completions.
Any violation triggers a \texttt{conservation\_violation} flag in
\texttt{run\_quality} and excludes the run from all research queries.

\end{document}

%% file: figures/_headline_ooi.tex
\renewcommand{\HEADLINEEPG}{4.33\times}

%% file: figures/stats_generated.tex

\newcommand{\successRateLinear}{99.8}
\newcommand{\lOneValidPct}{100.0}
\newcommand{\nGoldRuns}{2006}
\newcommand{\nAccRuns}{140}
\newcommand{\nPoorRuns}{72}
\newcommand{\nExperiments}{221}

\newcommand{\nTasks}{11}
\newcommand{\nTotalRuns}{2228}

\newcommand{\nCanonicalAgentic}{827}
\newcommand{\nCanonicalLinear}{827}
\newcommand{\headlineOOIClean}{4.33}
\newcommand{\meanEpGAgClean}{888.1}
\newcommand{\meanEpGLinClean}{205.3}
\newcommand{\ooiFactualQA}{4.65}
\newcommand{\ooiGsmBasic}{2.75}
\newcommand{\ooiGsmMulti}{7.63}
\newcommand{\ooiLogical}{4.68}
\newcommand{\ooiScienceQA}{5.79}

\newcommand{\wasteEnergyPct}{26.9}

\newcommand{\nTotalAttempts}{851}
\newcommand{\nRetryAttempts}{29}
\newcommand{\planPowerApi}{0.2}
\newcommand{\execPowerApi}{0.1}
\newcommand{\synthPowerApi}{0.1}
\newcommand{\overallPowerApi}{1.0}
\newcommand{\apiWaitMs}{14913.0}
\newcommand{\nApiRuns}{378}
\newcommand{\ooiTgCalc}{0.62}
\newcommand{\ooiTgDb}{0.96}
\newcommand{\ooiTgSeq}{1.55}
\newcommand{\planPowerLocal}{16.5}
\newcommand{\execPowerLocal}{14.8}
\newcommand{\synthPowerLocal}{15.5}

\newcommand{\nLocalRuns}{588}
\newcommand{\planTimeLocalMs}{21872.0}
\newcommand{\execTimeLocalMs}{15409.0}
\newcommand{\synthTimeLocalMs}{8532.0}

\newcommand{\preTaskUj}{87591.0}
\newcommand{\postTaskUj}{3389794.0}

\newcommand{\pureOrchOOI}{4.9}
\newcommand{\pureOrchAgJ}{1546.0}
\newcommand{\pureOrchLinJ}{315.6}
\newcommand{\nPureOrchAg}{305}
\newcommand{\ooiLocalMin}{3.02}
\newcommand{\ooiLocalMax}{7.63}
\newcommand{\nTotalSamples}{4119580}
\newcommand{\avgSampleIntervalMs}{9.71}
\newcommand{\empiricalSampleRateHz}{103.0}
\newcommand{\pctNearTenMs}{99.85}
\newcommand{\maxSampleGapMs}{332.6}
\newcommand{\nBaselines}{48}
\newcommand{\avgIdlePkgW}{2.26}
\newcommand{\avgBgCpuPct}{2.08}
\newcommand{\minBgCpuPct}{0.6}
\newcommand{\maxBgCpuPct}{5.4}

\newcommand{\overheadPctAgClean}{1.1}
\newcommand{\overheadPctLinClean}{2.12}
\newcommand{\overheadJAgClean}{6.895}
\newcommand{\overheadJLinClean}{4.424}
\newcommand{\meanTaskJAgClean}{623.2}
\newcommand{\meanTaskJLinClean}{222.6}

\newcommand{\nCleanOverheadRuns}{45}
\newcommand{\traceExpId}{946}
\newcommand{\traceRunAg}{3343}
\newcommand{\traceRunLin}{3342}
\newcommand{\tracePkgJ}{4274.1}
\newcommand{\traceBaselineJ}{446.4}
\newcommand{\traceDynJ}{3827.7}
\newcommand{\traceAttrJ}{3614.5}
\newcommand{\tracePlanJ}{552.6}
\newcommand{\traceExecJ}{115.2}
\newcommand{\traceSynJ}{69.2}
\newcommand{\traceGapJ}{2877.5}
\newcommand{\traceAttemptOneJ}{2256.1}
\newcommand{\traceAttemptTwoJ}{1358.4}
\newcommand{\traceEpGJ}{3614.5}
\newcommand{\traceLinEpGJ}{254.5}
\newcommand{\traceOOI}{14.2}
\newcommand{\tracePlanDurS}{35.1}
\newcommand{\traceExecDurS}{7.8}
\newcommand{\traceSynDurS}{2.7}
\newcommand{\traceGapDurS}{45.9}

\newcommand{\tracePlanVone}{1387.6}
\newcommand{\traceExecVone}{307.6}
\newcommand{\traceSynVone}{105.5}
\newcommand{\traceGapVone}{1813.7}
\newcommand{\traceAvgPowerW}{39.5}
\newcommand{\traceSynWatts}{25.9}

\newcommand{\traceGapPctVtwo}{79.6}
\newcommand{\traceGapPctVone}{50.2}
\newcommand{\traceRetryJ}{2256.1}
\newcommand{\traceOverheadJ}{621.4}
\newcommand{\hwCpuModel}{11th Gen Intel(R) Core(TM) i7-1165G7 @ 2.80GHz}

\newcommand{\hwHash}{ebe694229b1b9d87}
\newcommand{\envPythonVer}{3.13.7}
\newcommand{\envKernelVer}{6.17.0-23-generic}
\newcommand{\envGitCommit}{c239acea674daefe}
\newcommand{\envGitDirty}{0}

\newcommand{\nDistinctEnvHash}{9}

\newcommand{\hwHashShort}{ebe69422}
\newcommand{\envHashShort}{90146e38}
\newcommand{\traceRunHashShort}{076d5922}
\newcommand{\planEnergyLocalJ}{346.6}
\newcommand{\execEnergyLocalJ}{220.2}
\newcommand{\synthEnergyLocalJ}{147.2}
\newcommand{\planTimeApiMs}{395.0}
\newcommand{\execTimeApiMs}{774.0}
\newcommand{\synthTimeApiMs}{381.0}
\newcommand{\planEnergyApiJ}{0.052}
\newcommand{\execEnergyApiJ}{0.064}
\newcommand{\synthEnergyApiJ}{0.044}

\newcommand{\retryRatePct}{3.4}
\newcommand{\traceGapAvgPowerW}{62.7}